\documentclass[sigconf]{acmart}


\def\equationautorefname~#1\null{Equation~(#1)\null}
\usepackage{breakurl}
\usepackage{color}
\usepackage{bm}
\usepackage{xspace}
\usepackage{algorithm}
\usepackage{algorithmic}

\usepackage{mdwlist}    
\usepackage{paralist} 

\usepackage{utfsym}

\usepackage{adjustbox}
\usepackage{array}
\usepackage{booktabs}
\usepackage{multirow}

\usepackage{array,graphicx}
\usepackage{booktabs}
\usepackage{pifont}
\usepackage{color}

\usepackage{colortbl}
\definecolor{lightgray}{gray}{0.85}
\usepackage{multirow} 

\usepackage{fancybox} 
\usepackage{amsmath}
\usepackage{amsfonts}


%
\DeclareMathOperator{\Tr}{Tr}
\DeclareMathOperator{\sds}{S_{++}^{p}}

\usepackage{amsthm}

\newcommand{\RomanNumeralCaps}[1]
    {\MakeUppercase{\romannumeral #1}}

\usepackage{xcolor}
\usepackage{wasysym}

\usepackage{cleveref}

\setlength{\textfloatsep}{1.0pt}

\newcommand{\TSK}[1]{#1}

\newcommand{\bit}{\vspace{-0em}\begin{itemize}}
\newcommand{\bitnone}{\vspace{-0em}\begin{itemize}[label={}]}
\newcommand{\eit}{\end{itemize}\vspace{-0.2em}}
\newcommand{\ben}{\vspace{-0em}\begin{enumerate}}
\newcommand{\een}{\end{enumerate}\vspace{-0.2em}}
\newcommand{\bea}{\vspace{-0em}\begin{eqnarray}}
\newcommand{\eea}{\end{eqnarray}\vspace{-0.0em}}
\newcommand{\beq}{\vspace{-0.0em}\begin{equation}}
\newcommand{\eeq}{\end{equation}\vspace{-0.0em}}


\newtheorem{informalProblem}{Informal Problem}
\newtheorem{problem}{Problem}

\newtheorem{lemma}{\textsc{Lemma}}
\newtheorem{customlemma}{\textsc{Lemma}}
\newtheorem{definition}{Definition}

\newcommand{\hide}[1]{}
\newcommand{\reminder}[1]{#1}
\newcommand{\reminderrm}[1]{}

\newcommand{\myparaitemize}[1]{\noindent{\textbf{#1.}}}

\newcommand{\eq}[1]{Eq.~(#1)}

\newcommand{\fig}[1]{Figure~#1}

\newcommand{\appen}[1]{Appendix~#1}


\newcommand{\method}{\textsc{TimeCast}\xspace}
\newcommand{\methodmodel}{\textsc{TimeCast}\xspace}

\newcommand{\subject}{instance\xspace}
\newcommand{\subjects}{instances\xspace}

\newcommand{\intera}{dependency\xspace}


\newcommand{\argmax}{\operatornamewithlimits{argmax}}


\newcommand{\mathR}{\mathbb{R}}

\newcommand{\vecx}{x}
\newcommand{\matx}{X}
\newcommand{\ntime}{T}
\newcommand{\ltime}{t}

\newcommand{\ldim}{d}
\newcommand{\nseq}{V}
\newcommand{\lseq}{v}
\newcommand{\llstage}{k'}
\newcommand{\lstage}{k}
\newcommand{\nstage}{K}

\newcommand{\newlseq}{w}
\newcommand{\newnseq}{W}
\newcommand{\ctime}{t_c}

\newcommand{\alllabeleddata}{\mathcal{D}}

\newcommand{\rul}{\tau}

\newcommand{\indep}{\perp \!\!\! \perp}
\newcommand{\Func}{\mathcal{F}}
\newcommand{\allmodel}{\Theta}
\newcommand{\model}{\theta}

\newcommand{\allstageas}{S}
\newcommand{\stageas}{s}

\newcommand{\dmean}{\mu}
\newcommand{\dprecision}{\Lambda}
\newcommand{\empcov}{Q}

\newcommand{\plevel}{W}
\newcommand{\drift}{\nu}
\newcommand{\bstd}{\sigma_B}
\newcommand{\pfunc}{f}

\newcommand{\wtime}{\tau}

\newcommand{\lassopara}{\alpha}
\newcommand{\predpara}{\beta}
\newcommand{\Obj}{\Psi}
\newcommand{\obj}{\psi}
\newcommand{\weight}{A}
\newcommand{\cost}[2]{\psi_{s}(#1~|~#2)}
\newcommand{\optcost}[2]{C_{#1,#2}}

\newcommand{\fset}{\mathcal{U}}
\newcommand{\iter}{r}
\newcommand{\bv}{u}
\newcommand{\pramid}{i}
\newcommand{\pramidsub}{j}
\newcommand{\pidd}{d}
\newcommand{\pidp}{p}
\newcommand{\pids}{\allstageas}
\newcommand{\lp}{z}

\newcommand{\conviter}{\hat{r}}

\newcommand{\window}{m}



\newcommand{\cmapss}{\textit{Engine}\xspace}
\newcommand{\mapm}{\textit{Factory}\xspace}
\newcommand{\chronic}{\textit{ICU-Chronic}\xspace}
\newcommand{\acutedata}{\textit{ICU-Acute}\xspace}
\newcommand{\mixed}{\textit{ICU-Mixed}\xspace}

\renewcommand\footnotetextcopyrightpermission[1]{}

\acmConference[KDD '26]{Proceedings of the 32nd ACM SIGKDD Conference on Knowledge Discovery and Data Mining V.1}{August 09--13, 2026}{Jeju Island, Republic of Korea}
\acmBooktitle{Proceedings of the 32nd ACM SIGKDD Conference on Knowledge Discovery and Data Mining V.1 (KDD '26), August 09--13, 2026, Jeju Island, Republic of Korea}
\acmDOI{10.1145/3770854.3780164}

\author{Kota Nakamura}
\affiliation{%
  \institution{InfoTech, Toyota Motor Corporation}
  \state{Tokyo}
  \country{Japan}
}
\email{kota.nakamura.10@toyota.global}
\author{Koki Kawabata}
\affiliation{%
  \institution{SANKEN, Osaka University}
  \state{Osaka}
  \country{Japan}
}
\email{koki@sanken.osaka-u.ac.jp}
\author{Yasuko Matsubara}
\affiliation{%
  \institution{SANKEN, Osaka University}
  \state{Osaka}
  \country{Japan}
}
\email{yasuko@sanken.osaka-u.ac.jp}
\author{Yasushi Sakurai}
\affiliation{%
  \institution{SANKEN, Osaka University}
  \state{Osaka}
  \country{Japan}
}
\email{yasushi@sanken.osaka-u.ac.jp}
\begin{document}
\title{Fast Mining and Dynamic 
Time-to-Event Prediction over Multi-sensor Data Streams
}
\ccsdesc[500]{Information systems~Data stream mining}
\ccsdesc[500]{Information systems~Information systems applications}
\keywords{Streaming time-to-event prediction, 
Time series, Predictive maintenance, Patient monitoring}
\begin{abstract}
    Given real-time sensor data streams obtained from machines, how can we continuously predict when a machine failure will occur? This work aims to continuously forecast the timing of future events by analyzing multi-sensor data streams. A key characteristic of real-world data streams is their dynamic nature, where the underlying patterns evolve over time. To address this, we present \textsc{TimeCast}, a dynamic prediction framework designed to adapt to these changes and provide accurate, real-time predictions of future event time. Our proposed method has the following properties: (a)~\textit{Dynamic:} it identifies the distinct time-evolving patterns (i.e., stages) and learns individual models for each, enabling us to make adaptive predictions based on pattern shifts. (b)~\textit{Practical:} it finds meaningful stages that capture time-varying interdependencies between multiple sensors and improve prediction performance; (c)~\textit{Scalable:} our algorithm scales linearly with the input size and enables online model updates on data streams. Extensive experiments on real datasets demonstrate that \textsc{TimeCast} provides higher prediction accuracy than state-of-the-art methods while finding dynamic changes in data streams with a great reduction in computational time.
\end{abstract}

\maketitle
\section{Introduction}
    \label{010intro}
    With the rapid growth in Internet of Things (IoT) deployment,
real-time sensor data is being generated and collected by 
a wide range of applications 
\cite{sakurai2017smart},
including 
automated factories
\cite{gubbi2013internet},
digital twins \cite{tao2018digital,matsubara2025microadapt},
and electronic health record systems \cite{johnson2016mimic,kotogeevobrain},
from which 
one of the most fundamental demands in data science and engineering is 
deriving actionable insights,
such as predicting the timing of future machine failures or patient deaths.
For example, 
a significant interest for industrial managers 
is obtaining more accurate estimates of failure time 
to schedule preventive maintenance that minimizes downtime and maximizes operational lifetime
\cite{juodelyte2022predicting}.
For patient monitoring in intensive care units (ICUs),
it is essential to continuously estimate
the time (i.e., risk) of a clinically critical event,
such as death or the onset of disease, 
for better hospital resource management 
by focusing on patients that need it most 
\cite{harutyunyan2019multitask}.
To address these scenarios, 
we focus on 
an important yet challenging problem, 
namely,
streaming time-to-event prediction,
where 
our goal is to analyze real-time sensor sequences
and continuously predict when a future event will occur.

Time-to-event prediction 
captures relationships between 
observations 
(e.g., sensor readings) and 
the time duration until an event of interest occurs.
It can predict the event probabilities as a function of time, 
allowing us to flexibly 
assess the risk of event occurrence at any given time.
In contrast, widely used binary classifiers predict 
whether the event of interest occurs after a predetermined duration (e.g., $30$ seconds) and 
can only assess risk at a specific time
\cite{chen2025long,kotoge2024splitsee}.

The problem of time-to-event prediction becomes more challenging 
when data arrives in a streaming or online manner. 
Assume that we have a sensor data stream $\matx_{:\ctime}$, 
which is a time-evolving sequence of $\ldim$-dimensional observations,
i.e., $\matx_{:\ctime} = \{\vecx_{1},\ldots, \vecx_{\ctime} \}$, 
where $x_{\ctime}$ is the most recent observation, 
and $\ctime$ increases with every new time tick.
Such a situation requires an efficient algorithm that 
analyzes the continuously growing data stream
and makes real-time predictions 
to design countermeasures as soon as the risk increases.
Moreover, sensor data streams are usually non-stationary, 
changing their behavior over time.
For example, in ICU patient monitoring,
sensor measurements of vital signs 
change through distinct temporal phases, 
reflecting the patient’s condition 
as it approaches clinically critical events \cite{yoon2016discovery}.
These phases are key features that represent 
the temporal evolutions for entire data streams, 
which we specifically refer to as ``\textit{stages}'' hereafter.
Unlike existing time-to-event prediction approaches
\cite{wang2019machine,li2016multi,ameri2016survival,lee2018deephit,kvamme2019time},
which are static (as opposed to dynamic) and 
seek to predict the event time 
based on
individual observations $\vecx_{\ctime}$,
the ideal method should model 
time-evolving stages as the sequence-level features of $\matx_{:\ctime}$.
\textit{So what are meaningful stages 
for time-to-event prediction?}
We aim to find distinct temporal patterns that not only capture 
latent structural similarities in observations but also enhance prediction performance.

In this paper, we present \method,
a dynamic approach for time-to-event prediction
over multi-sensor data streams.
\method is based on a sequential multi-model 
structure that identifies meaningful stages in data streams by jointly learning descriptive and predictive features.
Thus, it can effectively predict event probabilities at future time points
while adapting to stage shifts.
In short, the problem we wish to solve is as follows.
\begin{informalProblem}
\textbf {Given} 
a sensor data stream $X_{:\ctime}$ for a machine/patient 
at risk of an event of interest occurring,
which consists of
observations until the current time $\ctime$,
i.e., $X_{:\ctime} = \{x_{1},\ldots, x_{\ctime}
\}$,
\bit
\item
\textbf {Find} time-evolving stages that improve prediction performance 
while identifying latent structural similarities in $X_{:\ctime}$,
\item 
\textbf {Predict} 
event probabilities at future time points,
\eit
continuously and quickly in a streaming fashion. 
\end{informalProblem}
\myparaitemize{Preview of Our Results}
\begin{figure}[t]
     \centering
     \hspace{-1.7em}
    \includegraphics[width=1.05\columnwidth]{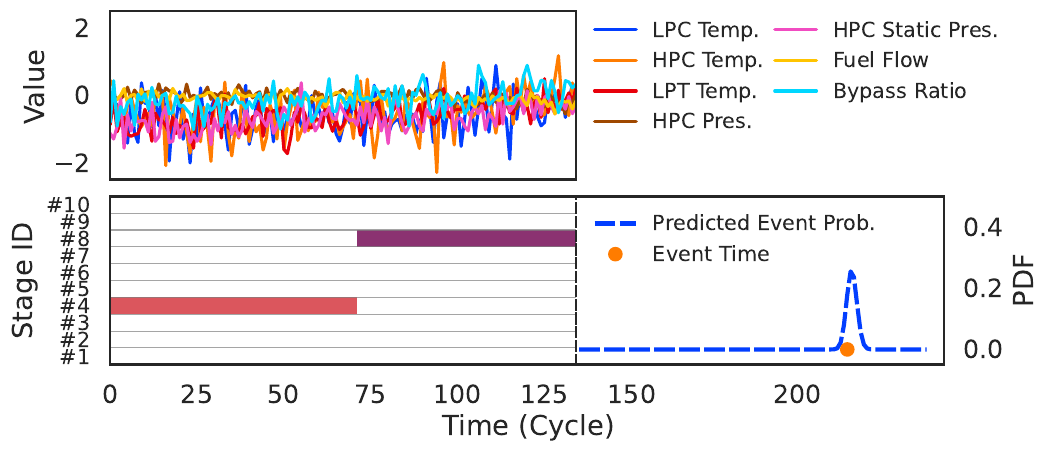}\\
    \vspace{-0.5em}
    (a) Snapshots of streaming time-to-event prediction ($\ctime=135$)\\
     \hspace{-1.7em}
    \includegraphics[width=1.05\columnwidth]{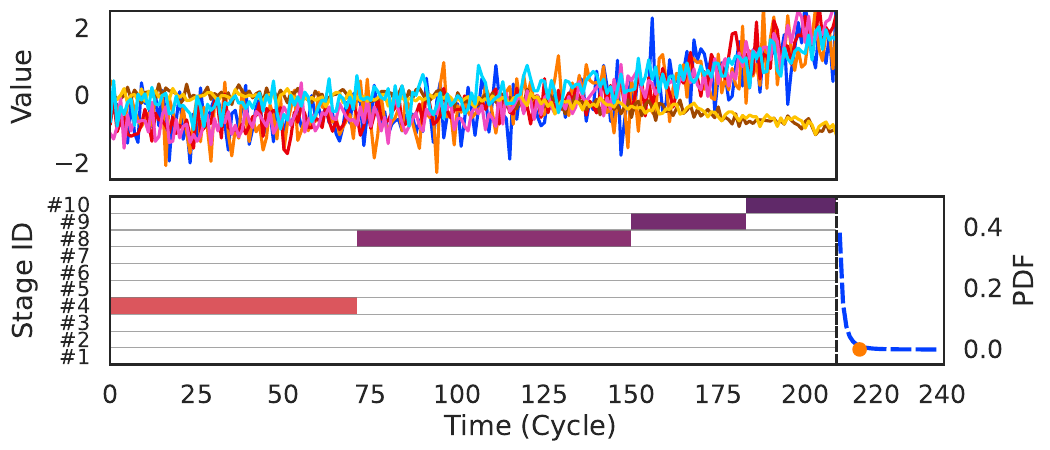}\\
    \vspace{-0.5em}
    (b) Snapshots of streaming time-to-event prediction ($\ctime=210$)
    \vspace{-1em}
\caption{Prediction results of \method over 
a machine failure-related sensor data stream.
The method continuously detects/updates time-evolving stages.
Then, it adaptively predicts event probabilities depending 
on the current stages.}
\label{fig:preview}
\end{figure}
Figure~\ref{fig:preview} shows an example of \method 
applied to turbofan jet engine data.
The data consists of seven sensor readings,
including temperatures and pressures, measured at every cycle.
Given the real-time sensor data stream,
\method continuously provides predictions for the time to failure
while capturing stages and their changes behind the data stream.

Figure~\ref{fig:preview}~(a) shows 
the original data stream (top) and 
the result we obtained with \method (bottom) 
at the current time $\ctime=135$.
\method firstly identifies stages and their changes 
in the sensor data stream observed up to the current time $\ctime$.
The figure illustrates that 
\method detects a stage shift from Stage~\#4 to Stage~\#8 around the time $\ltime=75$ and recognizes Stage~\#8 as the current stage.
Note that the original data does not exhibit any obvious patterns or stages.
Meanwhile, our method finds the stages that 
contribute to prediction performance while revealing interdependencies between sensors, 
as elaborated in Section~\ref{030problem}.
%
Finally, 
\method predicts the probabilities of a failure event 
by employing a stochastic time-to-event predictor associated 
with the current stage.
The bottom part of Figure~\ref{fig:preview}~(a) 
also shows the prediction results,
where the dashed blue line represents 
the predicted event probabilities, 
and the orange dot indicates the actual time that a failure occurs.
The result demonstrates that 
\method accurately predicts the failure time,
i.e., the predicted probabilities show
a high value around the actual event time.

Figure~\ref{fig:preview}~(b) shows 
the snapshots of \method outputs at the current time $\ctime=210$,
where \method identifies the stage shifts 
(i.e., $\#4 \rightarrow \#8 \rightarrow \#9 \rightarrow \#10$),
and then adaptively makes predictions 
with the predictor for Stage~$\#10$.
Here, the predicted event probabilities provide
a relatively high value at the recent time,
indicating a significant risk and suggesting the need for 
immediate shutdown or maintenance.
Consequently,
the method continuously detects/updates the shifting points 
in the sensor data streams 
and adaptively predicts the event time 
while switching predictors depending on the stages.
As we will show in the experiments,
our dynamic prediction approach improves prediction accuracy
with a great reduction in prediction time.

\myparaitemize{Contribution} 
The main contributions of our paper are:
\bit
    \item  
    \textit{Dynamic prediction approach}:
    We propose a novel prediction approach, \method,
    which captures stages behind non-stationary sequences 
    and adaptively predicts the event probabilities at future time points.
    \item
    \textit{Practicality}:
    By jointly learning descriptive and predictive features, \method can make accurate predictions while revealing individual temporal patterns and time-varying interdependencies between sensors.
    \item 
    \textit{Scalability}:
    The computational time of our algorithm is linear in the data size, with fast convergence. It can process incoming data in an online manner.
\eit

\myparaitemize{Reproducibility}
Our source code and datasets are available at \cite{WEBSITE}.

\myparaitemize{Outline}
The rest of this paper is organized in a conventional way. 
After introducing related studies in Section 2, 
we formally define our problems and present our model in Section 3.
We then propose the algorithms in Section 4. 
We provide our experimental results in Section 5, followed by a conclusion in Section 6.
%

\section{Related Work}
    \label{020related}
    \TSK{
\newcommand*\rot{\rotatebox{90}}

\newcommand*\OK{\ding{51}}
\newcommand*\NO{}
\newcommand{\SOME}{\small\rotatebox{0}{some}}

\TSK{
\begin{table}[t]
\centering
\caption{Capabilities of approaches.
}
\label{table:capability}
\vspace{-0.5em}
\vspace{-0.5em}
\scalebox{0.82}{
\begin{tabular}{l||c|cc|cc|c}
\toprule
&
\rot{DeepSurv/++} &
\rot{HMM/++} &
\rot{TS2Vec} &
\rot{CubeScope} & 
\rot{AC-TPC} &
\rot{\method} \\
\midrule
Time-to-Event Prediction  &\OK &\NO &\SOME &\NO &\OK &\OK \\
Time-Series Modeling  &\NO &\OK &\OK &\OK &\NO &\OK\\
Non-Stationarity &\NO &\OK &\NO &\OK &\OK &\OK\\
Predictive Clustering      &\NO &\NO &\NO &\NO &\OK &\OK\\
Streaming Time-to-Event Prediction &\NO &\NO &\NO &\NO &\NO &\OK\\
\bottomrule
\end{tabular}
}
\end{table}
}



}
The mining of time-stamped event data has attracted great interest 
in many fields \cite{MatsubaraSFIY12,kawabata2021ssmf,bhatia2023sketch,kamarthi2022camul,jang2023fast,li2020data,higashiguchi2025d,chihara2025modeling,fujiwara2025modeling}.
Table~\ref{table:capability} summarizes the relative advantages of our method
in relation to
five aspects, and only \method meets all the requirements.
Our work lies at the intersection of the following three categories.

\myparaitemize{Time-to-Event Prediction}
Event prediction methods based on temporal point processes \cite{shchur2021neural,bosser2023predictive}, 
such as Hawkes process \cite{dubey2023time,zhang2020self}
and cascade Poisson process \cite{10.5555/3023549.3023614,iwata2013discovering},
can 
model dependencies between recurrent events,
where they account for how the occurrence of past events influences the probability of future events while capturing the nature of events over time.
Differing from these methods, 
time-to-event prediction (also called survival analysis)
\cite{wang2019machine,li2016multi,ameri2016survival}
models the relationships between observations (e.g., sensor readings) 
and the remaining time until an event occurs.
%
These methods map observations to parameters of a 
stochastic process for the event time,
such as a Wiener process \cite{zhang2018degradation,lee2010proportional}.
%
Recent works have extended the classical Cox proportional hazards model \cite{cox1972regression} 
with neural networks \cite{katzman2018deepsurv,lee2018deephit,ren2019deep,xue2020deep}
to capture nonlinear relationships.
Cox-Time \cite{kvamme2019time} relaxes 
the proportionality assumption of the Cox model,
improving flexibility for large-scale data sets.
DeepHit \cite{lee2018deephit} is capable of capturing 
multiple types of events and their completing risks.
%
However, 
existing methods are static and 
are not intended to handle streams of time-varying observations.
In contrast, our method is a dynamic prediction approach that
can be aware of changes in data streams by incorporating time series modeling.

\myparaitemize{Time Series Modeling}
Hidden Markov models (HMM)
and other dynamic statistical models 
are extended to capture non-stationary sequences
and dynamically changing trends,
known as \textit{concept drift} \cite{lu2018learning},
by performing 
the simultaneous segmentation and clustering of the time series
\cite{MatsubaraSF14,HooiLSF17,hooi2019branch}.
To identify the segments and clusters effectively,
Gaussian graphical models (GGMs) and their variants 
\cite{hallac2017toeplitz,tomasi2018latent,tozzo2021statistical,obata2024mining}
capture the 
interdependencies of variables in each subsequence.
StreamScope \cite{kawabata2019automatic} extends 
a hierarchical HMM-based model to analyze data streams.
DMM \cite{obata2024dynamic} is a GGM-based subsequence clustering method 
that can identify segments and clusters across multiple sequences.
Although these approaches find segments and clusters 
by focusing only on the similarity of observations,
our proposed method identifies them 
by simultaneously evaluating both similarity and prediction performance.
Deep neural network (DNN) models, 
including representation learning methods \cite{yue2022ts2vec,lei2019similarity,cheng2020time2graph} 
and 
transformer-based architectures
\cite{yu2023dsformer,Yuqietal-2023-PatchTST},
provide
end-to-end learning frameworks
to capture the dynamics of sequences.
However, these methods are not designed to update models according to time-evolving data streams.

\myparaitemize{Summarization and Clustering}
Probabilistic generative models
\cite{MatsubaraSFIY12,beutel2014cobafi,nakamura2025cybercscope}
have been used to analyze large-scale event data, 
from which they find meaningful clusters, 
such as progression stages \cite{yang2014finding,shin2018discovering}.
CubeScope \cite{nakamura2023fast} has the ability to 
summarize time-stamped event streams and capture distinct temporal clusters.
However, these methods 
are incapable of modeling the relationships between clusters and future events
or predicting the time to event.
Predictive clustering \cite{blockeel2019predictive,qin2023t,aguiar2022learning} 
is a powerful technique 
for combining predictions on future outcomes with clustering.
AC-TPC \cite{lee2020temporal} is a
deep learning approach 
for temporal predictive clustering.
However, it is not designed for time-to-event prediction, 
as it does not incorporate the sequential connectivity of clusters.

Consequently,  
none of these studies focuses on
fast and dynamic time-to-event prediction for 
non-stationary data streams.

\section{Proposed Model}
    \label{030problem}
    \begin{table}[t]
\centering
\footnotesize
\caption{Symbols and definitions.}
\label{table:define}
\vspace{-0.5em}
\vspace{-0.5em}
\scalebox{1}{
\begin{tabular}{l|l}
\toprule
Symbol & Definition \\
\midrule
$\nseq, \newnseq$ & Number of \subjects for learning and prediction process, respectively \\
$X_{\lseq}$ & Sensor sequence for $\lseq$-th \subject, 
i.e., $X_{\lseq}=\{x_{\lseq,1}, \ldots, x_{\lseq,\ntime_{\lseq}} \}$ \\
$x_{\lseq,\ltime}$ &  
$\ltime$-th multivariate observation in $\lseq$-th \subject, 
i.e., $x_{\lseq,\ltime} \in \mathR^{\ldim}$ \\
$\ldim$ & Number of sensor variables \\
$\rul_{\lseq,\ltime}$ & 
Remaining time until an event of interest 
occurs for \subject $\lseq$ at time $\ltime$.
\\
$\alllabeleddata$ & Labeled collection, i.e., 
$\alllabeleddata = 
\{(\matx_{\lseq, :\ltime},\rul_{\lseq,\ltime})\}_{\lseq,\ltime=1}^{\nseq,\ntime_{\lseq}}$ \\
$\matx_{\newlseq,:\ltime}$ 
& Sensor data stream for $\newlseq$-th \subject
i.e., $X_{\newlseq,:\ltime}
=\{x_{\newlseq,1}, \ldots, x_{\newlseq,:\ltime},\ldots\}$ \\
%
\midrule
$\nstage$ & Number of stage models \\
$\model^{(\lstage)}$ & Stage model, 
i.e, $\model^{(\lstage)} = 
\{\dmean^{(\lstage)},\dprecision^{(\lstage)}, \pfunc^{(\lstage)}, \bstd^{(\lstage)} \}$ \\
$\allmodel$ & Stage model set, 
i.e, $\allmodel = \{\model^{(\lstage)}\}_{\lstage=1}^{\nstage}$ \\
$\stageas_{\lseq,\ltime}$ & Stage assignment for observations $\vecx_{\lseq,\ltime}$,
i.e., $\stageas_{\lseq,\ltime} \in \{1,\ldots,\nstage\}$ \\
$\allstageas$ & Stage assignment set \\
$\Func$ &
Full parameter set of \methodmodel,
i.e., $\Func = \{\allmodel,\allstageas\}$ \\
\bottomrule
\end{tabular}
}
\normalsize
\end{table}
In this section, 
we propose our model for streaming time-to-event prediction.
We begin by introducing our formal problem definition, 
and then describe our model in detail.

\subsection{Problem Formulation}
Table~\ref{table:define} lists 
the main symbols that we use throughout this paper.
Let us consider a collection of longitudinal sensor sequences,
where multiple sensor readings are obtained 
from multiple \subjects (e.g., machines or patients),
at every time point, that is, 
each entry is composed of the form \textit{(\subject, sensor, time)}.
Our goal is to (a) learn a prediction model using 
$\nseq$ \subjects for whom the event of interest has occurred, 
and (b) continuously predict the future event time 
for $\newnseq$ \subjects not observed during the learning process,
where $\nseq$ and $\newnseq$ indicate the number of \subjects.

\textit{(a) Model learning:}
We consider a set of sensor sequences 
$\{X_{\lseq}\}_{\lseq=1}^{\nseq}$ 
for $\nseq$ \subjects that 
were measured until the event of interest occurred. 
Each sequence $X_{\lseq}$ 
comprises $\ntime_{\lseq}$ sequential observations,
\small
\begin{align}
\label{eqn:raw_data}
    X_{\lseq}
    =
    \begin{bmatrix}
    | & | & | & & | \\
    x_{\lseq,1} & x_{\lseq,2} & x_{\lseq,3} & \ldots & x_{\lseq, \ntime_\lseq}\\
    | & | & | & & |
	\end{bmatrix},
\end{align}
\normalsize
where $x_{\lseq,\ltime} \in \mathR^{\ldim}$ is 
the $\ltime$-th multivariate observation in the $\lseq$-th \subject 
obtained from $\ldim$-dimensional sensors
\footnote{
Without loss of generality,
the observation $\vecx_{\lseq,\ltime}$ can be set as 
sliding window features with a window size $m$, 
where we can employ 
$[\vecx_{\lseq,\ltime-\window},\ldots, \vecx_{\lseq,\ltime}]$ 
as the observation.}
and $\ntime_{\lseq}$ indicates the event time.
%
We denote 
$X_{\lseq,:\ltime} = \{x_{\lseq,1},\ldots,x_{\lseq,\ltime}\}$ 
as the partial sequence observed up 
until the specific time $\ltime$.
Here, the label $\rul_{\lseq, \ltime}$ 
represents the time interval from the current time $\ltime$ 
to the event time $\ntime_{\lseq}$, i.e., 
$\rul_{\lseq, \ltime} = (\ntime_{\lseq} - \ltime)$.
In other words, 
the label $\rul_{\lseq, \ltime}$ indicates 
the remaining time until the event of interest occurs 
for the $\lseq$-th \subject at time $\ltime$.
Our aim is to learn a model $\Func$ that
can consistently predict the label $\rul_{\lseq, \ltime}$ 
for every time tick $\ltime$ and every \subject $\lseq$
based on the sequential observations $X_{\lseq,:\ltime}$.
%
More specifically, we want to predict 
the event probabilities as a function of time $p_{\lseq,\ltime}(\rul)$ 
to flexibly assess the risk of event occurrence at any given time.
Therefore, letting 
$\alllabeleddata =
\{(X_{\lseq, :\ltime},
\rul_{\lseq,\ltime})
\}_{\lseq,\ltime=1}^{\nseq,\ntime_{\lseq}}$
be a labeled collection,
we formally define our first problem as follows:
\begin{problem}
  \label{problem:leaning}
    \textbf{Given}
    a labeled collection,
    i.e.,
    $\alllabeleddata = 
    \{(X_{\lseq, :\ltime},
    \rul_{\lseq,\ltime})
    \}_{\lseq,\ltime=1}^{\nseq,\ntime_{\lseq}}$,
    \textbf{Learn} a model $\Func$ 
    that maps each $X_{\lseq,:\ltime}$ to 
    the event probabilities as a function of time 
    $p_{\lseq,\ltime}(\rul)$,
    i.e.,
    $p_{\lseq,\ltime}(\rul) = \Func(X_{\lseq,:\ltime})$.
\end{problem}

\textit{(b) Streaming time-to-event prediction:}
Once we have the model $\Func$, 
we aim to achieve a time-to-event prediction 
over multiple data streams 
$\{X_{\newlseq}\}_{\newlseq =1}^{\newnseq}$,
where each $X_{\newlseq}$ is a continuously growing sequence.
For the $\newlseq$-th \subject, the stream observed up to the
current time tick $\ctime$ is denoted as $X_{\newlseq,:\ctime}$, where $\ctime$ increases with
every new time tick.
Formally, our second problem is as follows.

\begin{problem}
  \label{problem:prediction}
    \textbf{Given}
    a data stream $X_{\newlseq,:\ctime}$ and the learned model $\Func$,
    \textbf{Predict} 
    the event probabilities as a function of time $p_{\newlseq,\ctime}(\rul)$ at every new time tick $\ctime$ and \textbf{Update} the model $\Func$ at every new instance $X_{\newlseq}$.
\end{problem}
    \label{040model}
    \subsection{\method}
We now present \methodmodel model, $\Func$,
which is designed to satisfy the following properties
for streaming time-to-event prediction:
\bit
\item \textit{Stochastic time-to-event predictor:}
provides event probabilities for each observation
by capturing the underlying stochastic process.
\item \textit{Interdependency-based descriptor}:
characterizes each observation based on 
statistical interdependencies between sensors.
\item \textit{Sequential multi-model structure:}
captures dynamic changes in sequences and enables adaptive prediction through multiple sequentially connected models.
\eit
Figure~\ref{fig:overview} shows an overview of \methodmodel model for a labeled collection $\alllabeleddata$.
The predictor and the descriptor are the building blocks of each model, which we refer to as a stage model.
Each stage model is associated with a specific stage in $\alllabeleddata$. 
The sequential multi-model structure consists
of multiple stage models that are sequentially connected.
Details are provided in the remaining subsections. 
\TSK{
\begin{figure}[t]
    \hspace{-1em}
    \includegraphics[width=1.05\columnwidth]{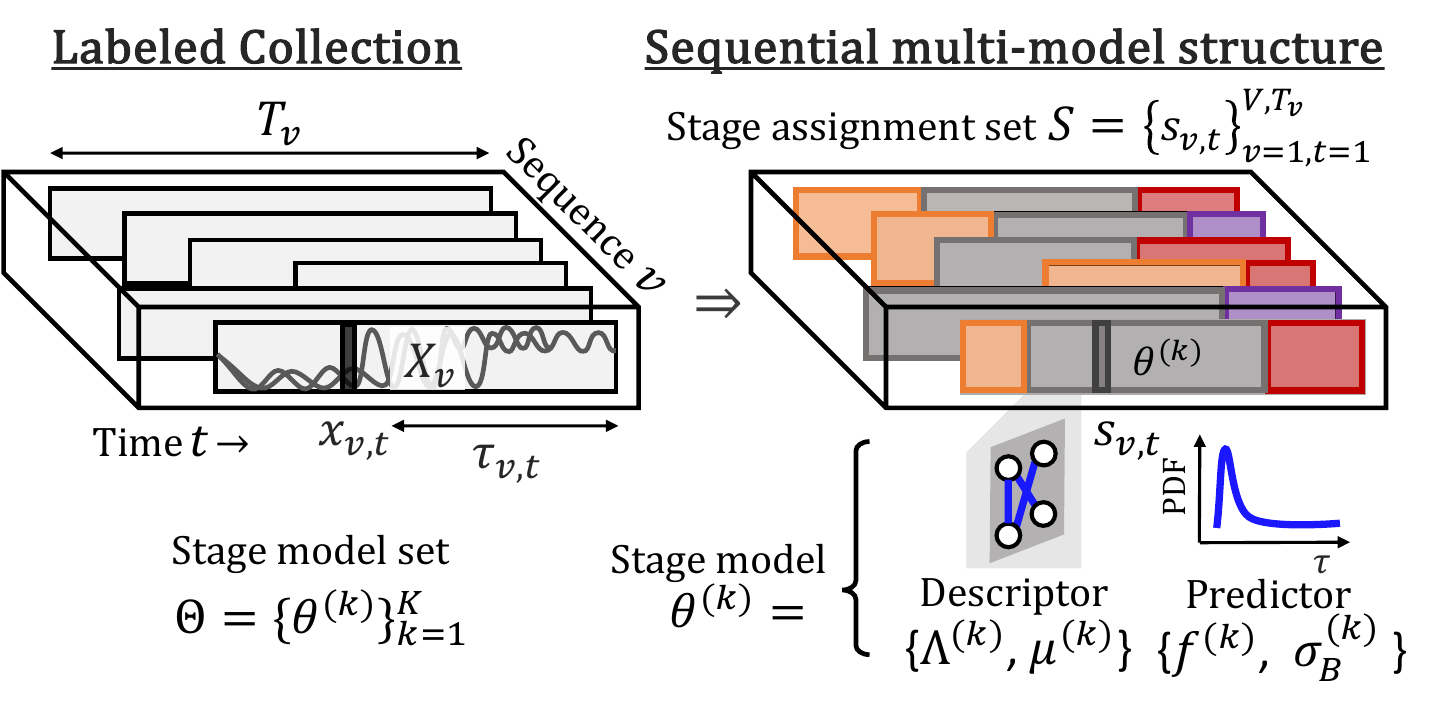}
    \vspace{-2em}
    \caption{
        An overview of \methodmodel for 
        a labeled collection $\alllabeleddata$.
        The method is based on a sequential multi-model structure, 
        which consists of a stage model set $\allmodel=\{\model^{(\lstage)}\}_{\lstage=1}^{\nstage}$ 
        and a stage assignment set $\allstageas$.
        It adopts a different stage model depending on time-varying behaviors.
        Each stage model $\model^{(\lstage)}$ consists of 
        a descriptor
        $\{\dprecision^{(\lstage)},\dmean^{(\lstage)}\}$ and 
        a predictor
        $\{\pfunc^{(\lstage)},\bstd^{(\lstage)}\}$.
    }
    \label{fig:overview}
\end{figure}
}
\subsubsection{Stochastic Time-to-Event Predictor}
We begin with the simplest case, 
in which we model only a single stage.
The first problem is to predict 
the event probabilities as a function of time $p_{\lseq,\ltime}(\rul)$
for a given observation $x_{\lseq,\ltime}$.
We employ the concept of first hitting time, 
where the event time is defined as the first time 
the underlying progression process reaches a prescribed boundary.
Specifically, 
we define the progression process 
$\plevel(\wtime)$ as a Wiener process,
allowing us to represent the first hitting time 
as an inverse Gaussian distribution \cite{cox1965theory,10.5555/73944}.
The progression process $\plevel(\wtime)$ is written as the following stochastic differential equation:
\small
\begin{align}
\label{eqn:sde_wiener}
d\plevel(\wtime) &= \drift d\wtime 
+ \bstd dB(\wtime),
\end{align}
\normalsize
where 
$\drift$ is the drift parameter capturing the rate of progression,
$\bstd$ is the diffusion parameter representing the uncertainty 
of the progression, and $\{B(\wtime)| \wtime \geq 0\}$ is a standard Brownian motion.
That is, for each $\wtime \geq 0$, $\bstd dB(\wtime) \sim \mathcal{N}(0,\bstd^2d\wtime)$, 
which indicates that uncertainty increases as $\wtime$ increases.

Since a true progression process is unobservable, 
we estimate the progression process $\plevel(\wtime)$ 
from observation $x_{\lseq,\ltime}$.
Specifically, 
we introduce a link function\footnote{
In our experiments, 
we used an orthogonal projection for $\pfunc$,
which has a closed-form solution with the weights $\weight$,
i.e., $\rul = \weight x $.
However, the link function is not constrained, and exploring the potential improvements remains an open problem for future work.
}
$\pfunc$ which maps each $\vecx_{\lseq,\ltime}$
to the event time $\rul_{\lseq,\ltime}$ 
and the progression process $\plevel(\wtime)$ is written as follows:
\small
\begin{align}
\label{eqn:sde_wiener_re}
\plevel(\wtime) = \drift \wtime + \bstd B(\wtime),
~~~~~
\drift = 
\frac{1}{f(x_{\lseq,\ltime})}.
\end{align}
\normalsize

Finally, we want to estimate the probabilities as a function of time $p_{\lseq,\ltime}(\rul)$ that 
the progression process $\plevel(\rul)$ reaches a boundary $c$.
Since Equation~$(\ref{eqn:sde_wiener_re})$ ensures that 
the drift parameter satisfies $0<\drift<1$ and that the boundary $c=1$,
the event probabilities $p_{\lseq,\ltime}(\rul)$ is written as follows: 
\small
\begin{align}
\label{eqn:inverse_gauss}
p_{\lseq,\ltime}(\rul;\pfunc,\bstd,\vecx_{\lseq,\ltime}) 
&=\frac{1}{\sqrt{2\pi\bstd^2\wtime^3}}
\exp\Bigl[{-\frac{\bigr(1-\frac{\rul}{\pfunc(\vecx_{\lseq,\ltime})}\bigr)^2}{2\bstd^2\wtime}}\Bigr].
\end{align}
\normalsize
Here, we refer to $\pfunc$ and $\bstd$ as a \textit{predictor},
which enables us to predict the event probabilities $p_{\lseq,\ltime}(\rul)$
for a given observation $\vecx_{\lseq,\ltime}$.

\subsubsection{Interdependency-based descriptor}
The next question is how to characterize 
each observation $\vecx_{\lseq,\ltime}$ to 
identify changes over sensor sequences.
Real sensor sequences might contain various types of noise
that can distort observed values.
To maintain robustness against noise,
we focus on
interdependencies between sensors 
rather than individual statistics.
Specifically,
we employ Gaussian graphical models (GGMs),
which model
the conditional independence 
between $ \ldim$ sensor variables 
in an observation $x_{\lseq,\ltime} \in \mathR^{\ldim}$.
The model captures the underlying distribution
of each observation $\vecx_{\lseq,\ltime}$,
i.e., $x_{\lseq,\ltime} \sim \mathcal{N}(\dmean,\dprecision^{-1})$,
where 
$\dmean \in \mathR^{\ldim}$
and 
$\dprecision \in \mathR^{\ldim \times \ldim}$ 
indicate the mean and 
the \textit{sparse} precision matrix, respectively.
Each value of the sparse precision matrix 
$\dprecision_{i,j}$ 
can indicate pairwise conditional independence,
that is,
\small
\begin{align}
\label{eqn:ggm2}
\dprecision_{i,j} &= 0 
\Leftrightarrow 
x_{\lseq,\ltime,i} \indep 
x_{\lseq,\ltime,j}~|~x_{\lseq,\ltime,\backslash \{i,j\}},
\end{align}
\normalsize
where $x_{\lseq,\ltime,i}$ denotes the $i$-th 
sensor variable in observation $x_{\lseq,\ltime}$,
resulting in the sparse precision matrix $\dprecision$
being interpreted as the adjacency matrix of a graph that 
describes the interdependencies.
Here, we refer to $\dmean$ and $\dprecision$ as a \textit{descriptor},
which characterizes a given observation $\vecx_{\lseq,\ltime}$.
\subsubsection{Sequential Multi-Model Structure}
Thus far, we have discussed \textit{predictor} and \textit{descriptor},
which provides event probabilities as a function of time 
and interdependency-based representations 
for each observation $\vecx_{\lseq,\ltime}$.
However, the model focuses only on individual observations 
and remains insufficient for capturing whole sensor sequences,
containing various types of stages.
We thus propose a sequence-level model architecture.
%
\begin{definition}[Stage model set: $\allmodel$]
Let $\allmodel$ be a set of $\nstage$ stage models.
The $\lstage$-th stage model  $\model^{(\lstage)}$ consists of a predictor and a descriptor, 
i.e., $\model^{(\lstage)} = 
\{\pfunc^{(\lstage)}, 
\bstd^{(\lstage)}, 
\dmean^{(\lstage)},
\dprecision^{(\lstage)} \}$.
%
\end{definition}
Our model employs 
a different stage model $\model^{(\lstage)}\in \allmodel$ 
that depends on a time-varying stage.
Thus, we also want to determine the assignments of stage models
for each observation $\vecx_{\ltime,\lseq}$.
\begin{definition}[Stage assignment set: $\allstageas$]
Let $\allstageas$ be a set of stage assignments 
$\stageas_{\lseq, \ltime}$.
The stage assignment $\stageas_{\lseq, \ltime}$ represents 
a stage index for each observation $x_{\lseq,\ltime}$,
i.e., $\stageas_{\lseq,\ltime} \in \{1,\ldots, \nstage\}$,
and is constrained by the sequential connectivity as follows:
\small
\begin{align}
\label{eqn:progression_stage}
    \ltime < \ltime' \Rightarrow	
    \stageas_{\lseq, \ltime} \leq \stageas_{\lseq, \ltime'} 
    ~|~\forall \lseq,\ltime,\ltime'.
\end{align}
\end{definition}
\normalsize
%
%
Note that the sequential connectivity of 
Eq~(\ref{eqn:progression_stage})
enforces that the stage assignments 
$\stageas_{\lseq,\ltime}$
never decrease over time, 
providing two key benefits.
\textit{First,} 
it ensures that stage assignments are interpreted as an irreversible progression, 
analogous to real-world processes 
leading to future events
such as disease progression \cite{yoon2016discovery} 
and machine degradation \cite{juodelyte2022predicting}.
\textit{Second,} it maintains temporal consistency by ignoring abrupt fluctuations and repeated changes in stage assignments.

Consequently, 
the complete parameter set of \methodmodel that we want to estimate is as follows.
\begin{definition}[Full Parameter Set of \methodmodel: $\Func$]
Let $\Func$ be a complete set of \methodmodel,
i.e., $\Func = \{\allmodel,\allstageas \}$,
where
$\allmodel$ indicates a stage model set and 
$\allstageas$ indicates a stage assignment set.
\end{definition}

\section{Optimization Algorithms}
    \label{050algo}
    Thus far, we have introduced 
our mathematical concept of \methodmodel.
Next, we
tackle Problem~\ref{problem:leaning} and Problem~\ref{problem:prediction}
by proposing the following 
two algorithms.
\bit
\item 
Learning algorithm for Problem~\ref{problem:leaning}:
Efficiently find the optimal parameter of $\Func$
for a given labeled collection $\alllabeleddata$.
\item
Streaming algorithm for Problem~\ref{problem:prediction}:
Adaptively predict the event probabilities $p_{\newlseq,\ctime}(\rul)$
for a data stream $\matx_{\newlseq,:\ctime}$.
\eit
We first introduce our objective function and then describe the proposed algorithms in detail.

\myparaitemize{Objective Function}
\label{sec:optalgo}
Given a labeled collection $\alllabeleddata$,
we aim to estimate the full parameter set 
$\Func=\{\allmodel,\allstageas\}$ that maximizes the following objective:
\small
\begin{center}
\fbox{
\begin{minipage}{0.95\columnwidth}
\begin{align}
\label{eqn:overall_objective}
 \argmax_{\allmodel, 
\allstageas \nearrow_{\ltime}
 }
 \sum_{\lstage=1}^{\nstage}
\underbrace{
\Obj_d(\alllabeleddata~|~\model^{(\lstage)},\allstageas)
}_{\textrm{Descriptor}}
+\predpara\underbrace{
\Obj_p(
\alllabeleddata~|~\model^{(\lstage)},\allstageas)
 }_{\textrm{Predictor}},
\end{align}
\end{minipage}
}
\end{center}
\normalsize
where $\allstageas\nearrow_{\ltime}$ is a constraint of sequential connectivity in Eq.~(\ref{eqn:progression_stage}) and 
$\predpara \geq 0$ is a coefficient chosen to balance 
the descriptors and the predictors.
The first term is the objective function of the 
descriptor for each stage $\lstage$:
\small
\begin{align}
\label{eqn:overall_objective_d1}
&\Obj_d(\alllabeleddata~|~\model^{(\lstage)},\allstageas)
= \sum_{
\stageas_{\lseq,\ltime} =\lstage
}
\Bigr [
\obj_{d}(x_{\lseq,\ltime}~|~\dmean^{(\lstage)},\dprecision^{(\lstage)})
\Bigr ]
-\lassopara ||\dprecision^{(\lstage)}||_{od,1},
\nonumber
\\
&\obj_{d}(x_{\lseq,\ltime}~|~\dmean^{(\lstage)},\dprecision^{(\lstage)})
= -\frac{1}{2}(x_{\lseq,\ltime} -\dmean^{(\lstage)})^\mathrm{T}
\dprecision^{(\lstage)}(x_{\lseq,\ltime}-\dmean^{(\lstage)}) 
\nonumber
\\
&~~~~~~~~~~~~~~~~~~~~~~~~~~~~~~~~~~
+\frac{1}{2}\log\det\dprecision^{(\lstage)} 
- \frac{\ldim}{2}\log (2\pi),
\end{align}
\normalsize
where 
$\obj_{d}$
is the Gaussian log likelihood 
that $x_{\lseq,\ltime}$ comes from stage $\lstage$, 
and $||\cdot||_{od,1}$ is 
the off-diagonal $\ell_1$-norm, 
which enforces element-wise sparsity for the precision matrix,
regulated by the trade-off parameter $\lassopara \geq 0$.
%
The second term is the log likelihood of 
the predictor for each stage $\lstage$
(up to a constant and scale):
\small
\begin{align}
\label{eqn:overall_objective_p}
&\Obj_p(
\alllabeleddata|\model^{(\lstage)},\allstageas) 
=
\sum_{
\stageas_{\lseq,\ltime} = \lstage
}
\obj_{p}(x_{\lseq,\ltime},\rul_{\lseq,\ltime}~|~\pfunc^{(\lstage)},\bstd^{(\lstage)}),
\nonumber
\\
&\obj_{p}(x_{\lseq,\ltime},\rul_{\lseq,\ltime}~|~\pfunc^{(\lstage)},\bstd^{(\lstage)})
=-
||\rul_{\lseq,\ltime} - \pfunc^{(\lstage)}(x_{\lseq,\ltime})||_2
-\log(\bstd^2) 
\nonumber
\\
&~~~~~~~~~~~~~~~~~~~~~~~~~~~~~~~~~~~~~~~~~
-\frac{1}{\bstd^2}||\frac{1}{\rul_{\lseq,\ltime}}-\mu_\rul^{(\lstage)}||_2,
\end{align}
\normalsize
where $||\cdot||_2$ denotes $\ell_2$-norm 
and $\mu^{(\lstage)}_{\rul}$ denotes
the mean of the increments in the $\lstage$-th progression process.

Notably, the objective function simultaneously evaluates both descriptive quality and prediction accuracy, so the resulting parameters 
$\{\allmodel, \allstageas \}$
reflect both aspects of the data.
This design follows 
the concept of multi-task learning \cite{caruana1997multitask},
where the joint learning of multiple tasks 
serves as an inductive bias that improves generalization.
As discussed in Section~\ref{060experiments},
the joint learning with both parts improved prediction performance in our experiments.
\subsection{Learning Algorithm}
Our first goal is to estimate the full parameter set $\Func$
to maximize the objective function in \eq{\ref{eqn:overall_objective}}.
However, this problem is combinatorial and non-differentiable, rendering widely-used SGD-based methods inapplicable.
Instead, we propose an efficient learning algorithm that exhibits stable and monotonic optimization behavior.
Algorithm~\ref{alg:main} shows the overall procedure,
where
we first initialize $\{\allmodel,\allstageas\}$ to some random values
and then iteratively update subsets of parameters.
First, we update the stage model set $\allmodel$ 
while keeping the stage assignments $\allstageas$ fixed
(lines 3-5).
Second, we update $\allstageas$ with the fixed parameters of
$\allmodel$
(lines 6-8).
We iterate these two steps until convergence.
We describe each step in detail in the following subsections. 

\begin{algorithm}[t]
    \footnotesize
    \caption{Learning Algorithm 
    $(\alllabeleddata, \nstage, \lassopara,\predpara)$}
    \label{alg:main}
\begin{algorithmic}[1]
    \REQUIRE
        (a) Labeled collection
        $\alllabeleddata = 
        \{(X_{\lseq, :\ltime},
        \rul_{\lseq,\ltime})\}_{\lseq,\ltime=1}^{\nseq,\ntime_{\lseq}}$
        \\ \hspace{1.35em}
        (b) Initial number of stages $\nstage$ 
        \\ \hspace{1.35em}
        (c) Sparse parameter $\lassopara$
        \\ \hspace{1.35em}
        (d) Balance parameter $\predpara$
    \ENSURE
        Full parameter set $\Func$
    \STATE $ \{\allmodel, \allstageas\} \leftarrow 
    \textsc{Initialize}(\alllabeleddata, \nstage, \lassopara,\predpara)$;
    \REPEAT
    \FOR{$\lstage$-th stage}
        \STATE $\allmodel \leftarrow \textsc{UpdateStageModels}
        (\alllabeleddata,\allstageas,\lassopara)$;
        {\color{blue}
        // Section~\ref{sec:upd_model}
        }
    \ENDFOR
    \FOR{$\lseq$-th sequence}
        \STATE $\allstageas \leftarrow 
        \textsc{UpdateAssignments}(X_{\lseq},\allmodel,\predpara)$;
        {\color{blue}
        // Section~\ref{sec:upd_assign}
        }
    \ENDFOR
    \UNTIL{convergence;}
    \RETURN $\Func$;
\end{algorithmic}
\normalsize
\end{algorithm}

\subsubsection{\textsc{UpdateStageModels}}
\label{sec:upd_model}
In this step, we estimate the parameters for all stage models
$\{\model^{(\lstage)}\}_{\lstage=1}^{\nstage}$,
while fixing the stage assignments $\allstageas$.
Once the stage assignments are fixed, 
we can optimize each stage $\model^{(\lstage)}$ independently 
by maximizing \eq{\ref{eqn:overall_objective_d1}} and \eq{\ref{eqn:overall_objective_p}}.
Specifically,
maximizing \eq{\ref{eqn:overall_objective_d1}} 
is equivalent to the graphical lasso problem \cite{glasso2008}.
Since this is a convex optimization problem,
we use the alternating direction method of multipliers \cite{boyd2011distributed},
which efficiently converges on the globally optimal solution.
%
%
%
In addition, we maximize \eq{\ref{eqn:overall_objective_p}} 
through maximum likelihood estimation.
The details of each maximization problem 
are described in Appendix~\ref{sec:app:algo}.

\subsubsection{\textsc{UpdateAssignments}}
In this step, we find the optimal stage assignments $\allstageas$,
while fixing the value of $\allmodel$.
We rewrite our maximization problem
(i.e., \eq{\ref{eqn:overall_objective}})
in terms of the stage assignments $\allstageas$,
which is written as follows for each sequence $\matx_{\lseq}$: \label{sec:upd_assign}
\TSK{
\begin{figure}[t]
    \hspace{-1.5em}
    \centering
    \includegraphics[width=0.8\columnwidth]{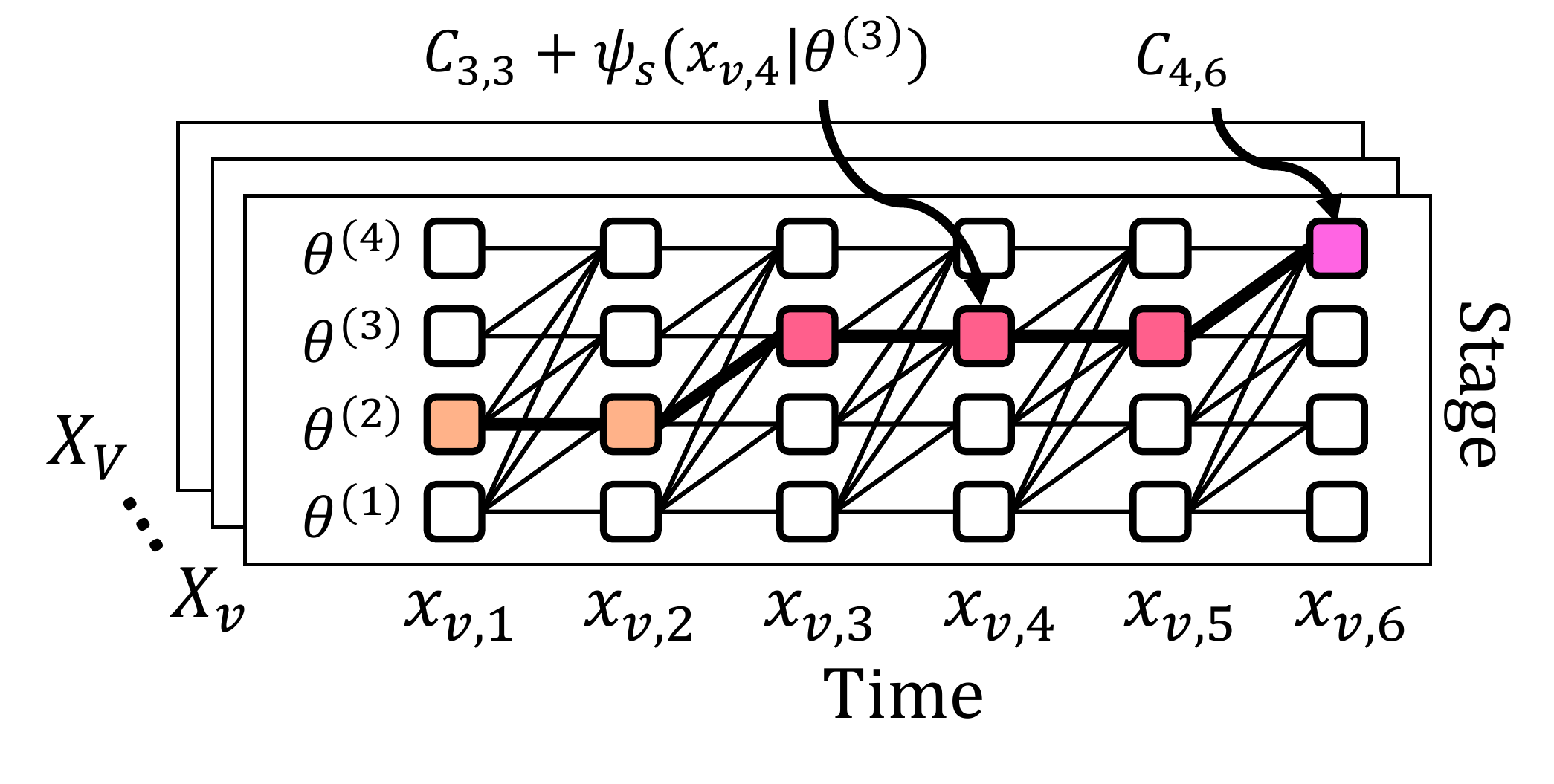}
    \vspace{-1.5em}
    \caption{
    Dynamic programming algorithm for stage assignments.
    The algorithm efficiently finds the optimal stage assignments
    by sequentially computing the cost $\optcost{\lstage}{\ltime}$.
    }
    \label{fig:path}
\end{figure}
} 
\small
\begin{align}
\label{eqn:stage_assign}
\argmax_{\stageas_{\lseq,\ltime} \nearrow_{\ltime}}
&\sum_{\ltime=1}^{\ntime_{\lseq}}~
\cost{x_{\lseq,\ltime}}{\model^{(\stageas_{\lseq,\ltime})}},
\\
\nonumber
\cost{x_{\lseq,\ltime}}{\model^{(\stageas_{\lseq,\ltime})}}=~
&\obj_{d}(x_{\lseq,\ltime}~|~\dmean^{(\stageas_{\lseq,\ltime})},
\dprecision^{(\stageas_{\lseq,\ltime})})
\\
\nonumber
&+\predpara \obj_{p}
(x_{\lseq,\ltime},\rul_{\lseq,\ltime}~|~
\pfunc^{(\stageas_{\lseq,\ltime})},\bstd^{(\stageas_{\lseq,\ltime})}).
\end{align}
\normalsize
Since each stage assignment $\stageas_{\lseq,\ltime}$ is 
constrained by the sequential connectivity in 
Eq.~(\ref{eqn:progression_stage}),
Eq.~(\ref{eqn:stage_assign}) is a combinatorial optimization problem
that requires finding
the optimal assignments of $\nstage$ stage models 
to $\ntime_{\lseq}$ observations.
However,
the number of possible assignments is 
$O(\nstage^{\ntime_{\lseq}})$, making it computationally prohibitive.
Therefore, we introduce 
a dynamic programming algorithm
that finds a globally optimal solution 
in only $O(\nstage^2\ntime_{\lseq})$ operations.
Specifically, 
we sequentially compute a cost $\optcost{\lstage}{\ltime}$,
which is provided as follows:
\small
\begin{align}
\label{eqn:assign_cost}
\hspace{0.75em}
\optcost{\lstage}{\ltime} = 
\begin{cases}
\cost{x_{\lseq,1}}{\model^{(\lstage)}} 
\hspace{10.6em}
(\ltime=1)
\\
\max_{1\leq \llstage \leq\lstage}\{\optcost{\llstage}{\ltime-1}\}
~+ \cost{x_{\lseq,\ltime}}{\model^{(\lstage)}}
\hspace{0.7em}
(2\leq \ltime \leq \ntime_{\lseq})\\
\end{cases}
\end{align}
\normalsize
When computing the cost $\optcost{\lstage}{\ltime}$, 
we also record the path of the stage assignments.
After computing $\optcost{\lstage}{\ntime_\lseq}$ for all stages $\nstage$,
we can find the optimal stage assignments by choosing the path
that gives the maximum cost, 
i.e., $\max_{1 \leq \lstage \leq \nstage}\{\optcost{\lstage}{\ntime_\lseq}\}$.
This procedure is illustrated as a lattice diagram in Figure~$\ref{fig:path}$, where the stages are on the vertical axis and the time on the horizontal axis.
At each time tick $\ltime$, 
$\optcost{\lstage}{\ltime}$ 
provides the path that maximizes the cost to reach the $\lstage$-th stage 
among possible assignments.

Overall, our learning algorithm iteratively updates the stage models and assignments. Each update improves the objective in a monotonic manner, leading to a stable optimization process.
%
\begin{lemma}[Proof in \appen{\ref{sec:app:complexity_learning}}]
\label{complexity_learning}
The time complexity of the learning algorithm 
in \method is $O(\#iter \cdot \sum_{\lseq}\ntime_{\lseq})$.
\end{lemma}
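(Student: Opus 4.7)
The plan is to bound the cost of one outer iteration of Algorithm~\ref{alg:main} and then multiply by the number of iterations, treating the dimensionality $\ldim$ and the number of stages $\nstage$ as constants absorbed into the big-$O$. The algorithm alternates between \textsc{UpdateStageModels} (lines 3--5) and \textsc{UpdateAssignments} (lines 6--8), so I would analyze each phase separately and then combine.

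First I would bound \textsc{UpdateStageModels}. For each stage $\lstage$, the descriptor update requires accumulating the empirical mean and covariance from observations currently assigned to stage $\lstage$, which costs $O(\ldim^{2} n_{\lstage})$ where $n_{\lstage}$ is the number of such observations; summing across stages partitions the data, giving $O(\ldim^{2} \sum_{\lseq} \ntime_{\lseq})$. The ensuing graphical-lasso subproblem, solved by ADMM as described in \appen{\ref{sec:app:algo}}, contributes $O(\ldim^{3})$ per stage to total $O(\nstage \ldim^{3})$, a term that does not grow with the data size. The predictor update (closed-form orthogonal projection plus $\bstd^{(\lstage)}$ estimation) again scans each stage's assigned observations and costs $O(\ldim^{2} n_{\lstage})$, which again sums to $O(\ldim^{2} \sum_{\lseq} \ntime_{\lseq})$. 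Thus the whole phase is $O(\ldim^{2} \sum_{\lseq} \ntime_{\lseq} + \nstage \ldim^{3})$.

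Next I would bound \textsc{UpdateAssignments}. For each sequence $\matx_{\lseq}$, the dynamic program in \eq{\ref{eqn:assign_cost}} fills a table of $\nstage \times \ntime_{\lseq}$ cells; each cell requires a maximum over at most $\nstage$ predecessors and a single evaluation of $\cost{\vecx_{\lseq,\ltime}}{\model^{(\lstage)}}$, which is $O(\ldim^{2})$ for the Gaussian log-likelihood and $O(\ldim)$ for the inverse-Gaussian predictor term. So each sequence costs $O\!\left((\nstage + \ldim^{2})\,\nstage\,\ntime_{\lseq}\right)$, and summing over \subjects yields $O\!\left((\nstage + \ldim^{2})\,\nstage \sum_{\lseq} \ntime_{\lseq}\right)$. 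Combining both phases, one outer iteration is $O(\sum_{\lseq} \ntime_{\lseq})$ once the constants $\nstage$ and $\ldim$ are absorbed, and multiplying by the $\#iter$ outer passes yields the claim $O(\#iter \cdot \sum_{\lseq} \ntime_{\lseq})$.

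The only delicate point, which I would state explicitly rather than try to hide, is the ADMM subroutine inside the graphical lasso: strictly speaking it is an iterative solver, but since it converges in a number of iterations that is independent of the stream length (it depends on the conditioning of the $\ldim \times \ldim$ empirical covariance and on the tolerance), it contributes only a constant-factor overhead to the per-stage $O(\ldim^{3})$ cost and does not interact with $\sum_{\lseq} \ntime_{\lseq}$. I would therefore phrase the lemma as holding under the standard assumption that $\nstage$, $\ldim$, and the inner ADMM iteration count are $O(1)$; the linearity in the total stream length then follows immediately from the partition-based accounting above for the stage-model step and from the $\nstage \times \ntime_{\lseq}$ shape of the DP table for the assignment step.
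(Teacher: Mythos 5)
Your proposal is correct and takes essentially the same route as the paper's proof: decompose each outer iteration into \textsc{UpdateStageModels} and \textsc{UpdateAssignments}, charge the assignment step $O(\nstage^2\ntime_{\lseq})$ per sequence via the dynamic program of \eq{\ref{eqn:assign_cost}}, treat $\nstage$, $\ldim$, and the inner ADMM iterations as constants, and multiply by $\#iter$. Your bookkeeping for the stage-model phase is in fact slightly more careful than the paper's, which charges that phase only $O(\#iter_d\cdot\nstage)$ and leaves implicit the linear-time pass over the data needed to form the empirical means and covariances; since the assignment DP already contributes the dominant $O(\sum_{\lseq}\ntime_{\lseq})$ term, this does not affect the stated bound.
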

\subsection{Streaming Algorithm}
We now address Problem~\ref{problem:prediction},
namely streaming time-to-event prediction.
%
Assuming an observation $\vecx_{\newlseq, \ctime}$ 
is continuously obtained as the current observation of a data stream
for the $\newlseq$-th instance
$\matx_{\newlseq, :\ctime}$,
our aim is to predict event probabilities $p_{\newlseq,\ctime}(\rul)$  
while updating the stage model set $\allmodel$ 
to maintain prediction performance.
%
Algorithm~\ref{alg:scan} outlines the overall procedure, which consists of two steps:

(1) \textsc{AdaptivePredict}:
Continuously estimates the current stage 
$\stageas_{\newlseq,\ctime}$
from the current observation
$x_{\newlseq,\ctime}$ 
in the context of the data stream
$X_{\newlseq,:\ctime}$.
The current stage
$\stageas_{\newlseq,\ctime}$
is identified by maximizing the following equation:
\small
\begin{align}
\label{eqn:stage_assign_stream}
\stageas_{\newlseq,\ctime} 
\leftarrow 
\argmax_{\{\stageas_{\newlseq,\ltime}\}\nearrow_{\ltime}}
\sum_{\ltime=1}^{\ctime}
~~\obj_{d}(\vecx_{\newlseq,\ltime}~|~\dmean^{(\stageas_{\newlseq,\ltime})},
\dprecision^{(\stageas_{\newlseq,\ltime})}).
\end{align}
\normalsize
Although estimating the current stage
$\stageas_{\newlseq,\ctime}$ 
requires past observations $\{\vecx_{\newlseq,1},\ldots,\vecx_{\newlseq,\ctime-1}\}$,
accessing them at each prediction step is computationally expensive in streaming environments.
Therefore, we solve \eq{\ref{eqn:stage_assign_stream}} in an online manner.
Similar to \eq{\ref{eqn:assign_cost}}, we use a dynamic programming algorithm,
which is written as follows:
\small
\begin{align}
\label{eqn:stage_assign_stream_help}
\optcost{\lstage}{\ctime} = 
&
\begin{cases}
\hspace{0.5em}
\obj_{d}(x_{\newlseq,\ctime}~|~\dmean^{(\lstage)},
\dprecision^{(\lstage)}),
\hspace{10.6em}
(\ctime=1)
\\
\hspace{0.5em}
\max_{1\leq \llstage \leq\lstage}\{\optcost{\llstage}{\ctime-1}\}
~+ \obj_{d}(x_{\newlseq,\ctime}~|~\dmean^{(\lstage)},
\dprecision^{(\lstage)})
\hspace{1em}
(2\leq \ctime)
\nonumber\\
\end{cases}
\\
&\hspace{5em}
\stageas_{\newlseq,\ctime} 
\leftarrow 
\argmax_{1\leq \lstage \leq\nstage}~~
\optcost{\lstage}{\ctime}.
\end{align}
\normalsize
This procedure enables us to estimate the current stage $\stageas_{\newlseq,\ctime}$
based on the current observation $\vecx_{\newlseq,\ctime}$
and the cost set $\{\optcost{\lstage}{\ctime-1}\}_{\lstage=1}^{\nstage}$
at previous time $\ctime-1$.
For the next time tick $\ctime + 1$, 
we retain the cost set $\{\optcost{\lstage}{\ctime}\}_{\lstage=1}^{\nstage}$
and discard the cost set $\{\optcost{\lstage}{\ctime-1}\}_{\lstage=1}^{\nstage}$.
Finally, 
the algorithm predicts the event probabilities as a function of time
$p_{\newlseq,\ctime}(\rul)$ 
by exploiting the stage model 
$\model^{(\stageas_{\newlseq,\ctime})}$.

(2) \textsc{OnlineModelUpdate:} Runs when the data stream $X_{\newlseq, :\ctime}$ is observed up to the event time $\ntime_{\newlseq}$, i.e., when 
$\{(X_{\newlseq, :\ltime}, \rul_{\newlseq,\ltime})
\}_{\ltime=1}^{\ntime_{\newlseq}}$ are available.
To adapt non-stationary data streams, 
this step employs a generate-and-validate approach that 
generates a new stage model set $\allmodel^{+}$ 
and adopts the models only if this leads to improved prediction accuracy.
Specifically, we first initialize the new stage model $\model^{+}$ based on observations 
assigned to the stage with the worst prediction accuracy.
An augmented model set 
$\allmodel^{+}=\{\allmodel,\model^{+}\}$
is updated by iterating \textsc{UpdateAssignments} and \textsc{UpdateStageModels}.
Given the existing model set $\allmodel$ and
the augmented model set 
$\allmodel^{+}$,
we compare their prediction accuracies on $X_{\newlseq}$.
Note that estimating the existing stage models 
$\{\model^{(\lstage)}\}_{\lstage=1}^{\nstage}$
requires observations assigned to each stage in the learning algorithm. 
Owing to the careful design of the stage models based on the means and covariances of observations, 
the parameters of 
$\{\model^{(\lstage)}\}_{\lstage=1}^{\nstage}$ 
can be updated online using Welford's algorithm \cite{welford1962note}.
\begin{lemma}[Proof in \appen{\ref{sec:app:complexity_streaming}}]
\label{complexity_streaming}
The time complexity of the streaming algorithm
in \method is 
$O((1 + \#iter) \cdot \nstage^2)$
amortized per time step.
\end{lemma}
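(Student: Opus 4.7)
The plan is to analyze the two subroutines of the streaming algorithm separately and amortize their costs against the per-tick prediction work. \textsc{AdaptivePredict} fires at every new time tick $\ctime$ and must be bounded uniformly, while \textsc{OnlineModelUpdate} fires only once per instance (at the event time $\ntime_{\newlseq}$) and its total cost will be spread over all $\ntime_{\newlseq}$ ticks of that instance.

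First I would bound \textsc{AdaptivePredict}. At each new time tick, the algorithm updates $\optcost{\lstage}{\ctime}$ for every $\lstage \in \{1,\ldots,\nstage\}$ using Eq.~(\ref{eqn:stage_assign_stream_help}). Each such update takes a maximum over the previously stored costs $\{\optcost{\llstage}{\ctime-1}\}_{\llstage \leq \lstage}$ and evaluates a single descriptor log-likelihood $\obj_{d}(\vecx_{\newlseq,\ctime}\mid \dmean^{(\lstage)},\dprecision^{(\lstage)})$. The max step dominates at $O(\nstage)$, so the full sweep across all stages costs $O(\nstage^2)$ per tick. The previous cost vector is then discarded, so no work linear in the elapsed time is ever performed. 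This handles the ``$1\cdot \nstage^2$'' term of the bound.

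Next I would bound \textsc{OnlineModelUpdate}. This routine is invoked at most once per instance and performs $\#iter$ outer iterations that alternate \textsc{UpdateAssignments} and \textsc{UpdateStageModels} on the augmented model set $\allmodel^{+}$. By the same dynamic-programming argument underlying Lemma~\ref{complexity_learning}, \textsc{UpdateAssignments} on the single sequence $\matx_{\newlseq,:\ntime_{\newlseq}}$ costs $O(\nstage^2 \ntime_{\newlseq})$. For \textsc{UpdateStageModels}, the key observation is that each stage model is parameterized by sufficient statistics of its assigned observations, namely the means, empirical covariances, and the closed-form orthogonal-projection predictor weights, all of which can be accumulated in $O(\ntime_{\newlseq})$ work per iteration via Welford's algorithm. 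Multiplying by $\#iter$ iterations gives a total cost of $O(\#iter \cdot \nstage^2 \ntime_{\newlseq})$, which amortized over the $\ntime_{\newlseq}$ ticks of the instance charges $O(\#iter \cdot \nstage^2)$ per tick. Adding the two contributions produces the claimed $O((1+\#iter)\cdot \nstage^2)$ bound.

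The hard part is justifying that \textsc{UpdateStageModels} can truly be executed in time independent of the entire stream history accumulated across all past instances; this rests on the fact that both the GGM descriptor and the inverse-Gaussian predictor depend only on aggregate statistics of the assigned observations, so they admit constant-memory running updates rather than a second pass over previously-seen instances. A secondary subtlety is ensuring that the generate-and-validate step, which compares $\allmodel$ against $\allmodel^{+}$ on $X_{\newlseq}$, does not add super-linear work; this reduces to one additional predictor evaluation per observation of the current instance, an $O(\ntime_{\newlseq})$ cost that is absorbed by the same amortization.
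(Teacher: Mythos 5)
Your proposal is correct and follows essentially the same route as the paper's proof: bound \textsc{AdaptivePredict} at $O(\nstage^2)$ per tick via the one-step dynamic-programming update, bound the once-per-instance \textsc{OnlineModelUpdate} by $O(\#iter\cdot\nstage^2\ntime_{\newlseq})$ (dominated by the assignment DP, with Welford-style sufficient-statistic updates and the validation pass absorbed as lower-order terms), and amortize the latter over the $\ntime_{\newlseq}$ ticks of the instance. The only cosmetic difference is that the paper carries the lower-order $O(\nstage)$ and $O(\ntime_{\newlseq})$ terms explicitly before discarding them, whereas you absorb them directly.
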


\TSK{
\begin{algorithm}[t]
    \footnotesize
    \caption{Streaming Algorithm
    $(x_{\newlseq,\ctime},
    \allmodel,
    \{\optcost{\lstage}{\ctime-1}\}^{\nstage}_{\lstage=1})$}
    \label{alg:scan}
\begin{algorithmic}[1]
    \REQUIRE
        (a) Recent observation $x_{\newlseq, \ctime}$
        \\ \hspace{1.4em}
        (b) Stage model set $\allmodel$
        \\ \hspace{1.4em}
        (c) Previous cost set $\{\optcost{\lstage}{\ctime-1}\}^{\nstage}_{\lstage=1}$
    \ENSURE
        (a) Predicted event probabilities
        $p_{\newlseq,\ctime}(\rul)$ 
        \\ \hspace{2.05em}
        (b) Updated stage model set $\allmodel$
        \\ \hspace{2.05em}
        (c) Updated cost set $\{\optcost{\lstage}{\ctime}\}^{\nstage}_{\lstage=1}$
    \STATE$p_{\newlseq,\ctime}(\rul),
    \{\optcost{\lstage}{\ctime}\}^{\nstage}_{\lstage=1}
    \leftarrow\textsc{AdaptivePredict}(x_{\newlseq,\ctime},\allmodel,\{\optcost{\lstage}{\ctime-1}\}^{\nstage}_{\lstage=1})$;
    \STATE 
    \textbf{if} {$\ctime == \ntime_{\newlseq}$} \textbf{then}
    {\color{blue}
    ~// \textsc{OnlineModelUpdate}
    }
    \STATE \quad
    $\model^{+} \leftarrow 
    \textsc{Initialize}(X_{\newlseq})$;~~
    $\allmodel^{+} \leftarrow \{\allmodel,\model^{+}\}$;
    \STATE \quad 
    \textbf{repeat} 
    \STATE \quad\quad
    $\allstageas
    \leftarrow \textsc{UpdateAssignments}(X_{\newlseq},\allmodel^{+},\predpara)$;
    \STATE \quad\quad
    $\allmodel^{+} \leftarrow \textsc{UpdateStageModels-Online}
    (
X_{\newlseq},
\allstageas,\lassopara,\allmodel^{+})$;
    \STATE \quad 
    \textbf{until} convergence;
    
    \STATE \quad
    \textbf{if}
    {$\allmodel^{+}$ improves prediction accuracy}
    \textbf{then}
    \STATE \quad\quad $\allmodel \leftarrow \allmodel^{+}$
    \STATE \quad \textbf{end if}
    \STATE \textbf{end if}
    \RETURN $p_{\newlseq,\ctime}(\rul)$,$\allmodel$,$\{\optcost{\lstage}{\ctime}\}^{\nstage}_{\lstage=1}$;
\end{algorithmic}
\normalsize
\end{algorithm}
}
\section{Experiments}
    \label{060experiments}
    \begin{figure*}[t]
    \centering
     \includegraphics[width=1\linewidth]{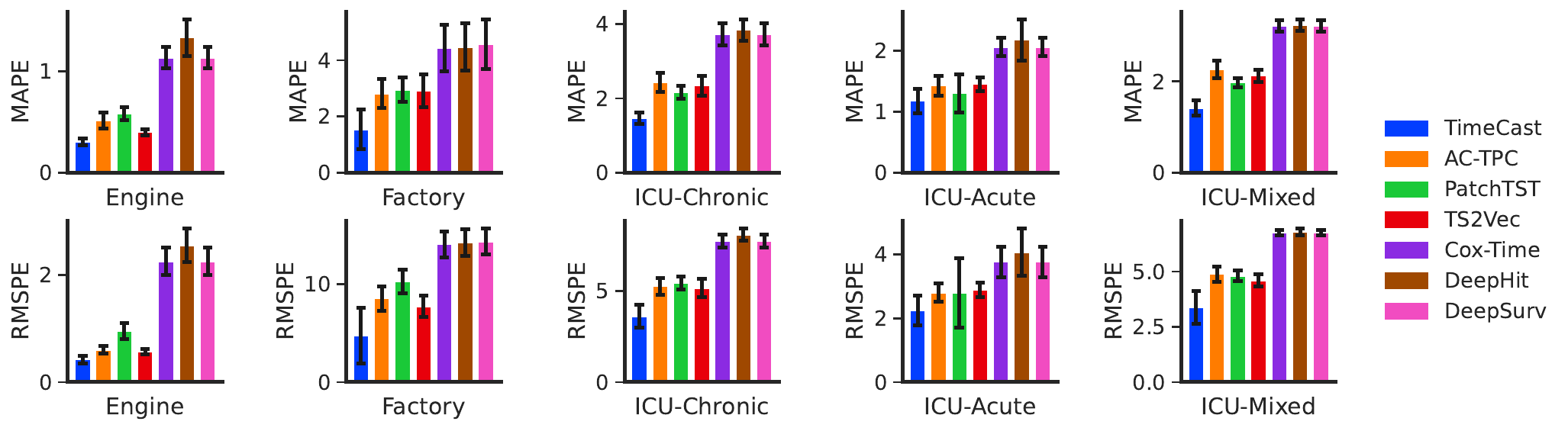}\\
    \vspace{-1.5em}
    \caption{  
    Comparison of prediction performance.
\method consistently outperforms its baselines (lower is better).
    }
    \label{fig:compare_all}
\end{figure*}
In this section, 
we evaluate the performance of \method.
We answer the following questions through 
experiments.
\begin{itemize}
    \item[(Q1)]
        \textit{Accuracy}:
        How accurately does it achieve 
        streaming time-to-event prediction?
    \item[(Q2)]
    \textit{Scalability}:
        How does it converge
        and scale in terms of computational time?
    \item[(Q3)]
    \textit{Real-world Effectiveness}:
        How does it provide meaningful discoveries through stage identification?    
\end{itemize}
%
\myparaitemize{Experimental Settings}
\TSK{
\begin{table}[t]
    \centering
    \caption{Dataset description}
    \label{table:datasets}
    \vspace{-0.5em}
    \vspace{-0.5em}
\scalebox{0.88}{
    \begin{tabular}{l|cccc}
        \toprule        
        Dataset & $\nseq + \newnseq$ & \hspace{2em}$\ldim$ & $\sum_{\lseq} \ntime_{\lseq}$ & $Avg(\ntime_{\lseq})$\\
        \midrule
        \multicolumn{4}{l}{
        Industrial dataset: 
        \textit{(machine, sensor, time)} 
        $\rightarrow$ Failure
        }\\
        \midrule
        \#1 \cmapss
        &  $200$ & \hspace{2em} $7$ & $45,351$ 
        & $227\pm72$ \\
        \#2 \mapm
        &  $98$& \hspace{2em} $4$ & $89,538$ 
        & $914\pm710$ \\
        \midrule
        \multicolumn{4}{l}{
        Medical dataset: 
        \textit{(patient, vital sign, time)} 
        $\rightarrow$ Mortality
        }\\
        \midrule
        \#3 \chronic
        &  $355$ & \hspace{2em} $6$ & $98,320$ 
        & $277\pm217$ \\
        \#4 \acutedata
        &  $112$ & \hspace{2em} $6$  & $20,193$ 
        & $180\pm96$ \\
        \#5 \mixed 
        &  $521$ & \hspace{2em} $6$  & $141,336$ 
        & $271\pm194$ \\
        \bottomrule
    \end{tabular}
}
 \normalsize
\end{table}
}
We use five publicly available real-world datasets listed in Table~\ref{table:datasets},
consisting of sensor sequences 
recorded until a particular event occurs
in mechanical systems and patients at ICUs. 
The six baseline methods are as follows:
DeepSurv~\cite{katzman2018deepsurv},
DeepHit~\cite{lee2018deephit}, and 
Cox-Time~\cite{kvamme2019time},
which are time-to-event prediction methods.
We also compared our method with 
TS2Vec~\cite{yue2022ts2vec}, 
a time series representation learning method; 
PatchTST~\cite{Yuqietal-2023-PatchTST}, 
a transformer-based time-series modeling approach;
and
AC-TPC~\cite{lee2020temporal},
a predictive clustering method.
We use scale-invariant performance metrics, MAPE and RMSPE,
based on percentage errors 
between the predicted event time $\hat{\rul}_{\newlseq,\ltime}$ and 
the true event time $\rul_{\newlseq,\ltime}$.
Although the true event time $\rul_{\newlseq,\ltime}$ 
can be a larger value depending on the sequence length,
these metrics allow for
consistent comparison across different scales.
Lower values indicate better prediction accuracy.
For evaluations, we apply 5-fold cross validation. 
We randomly separated the \subjects into a training set (80\%) and a testing
set (20\%). We reserved 10\% of the training set as a validation set.
The hyperparameters were selected 
based on the prediction performance on the validation set.
We used the parameters of \method for 
$\lassopara=1$, $\predpara=0.1$, and 
$\nstage=5$.
Detailed experimental settings, 
including data preprocessing and baseline parameters,
are provided in
Appendix~\ref{sec:app:exp}. 

\myparaitemize{Q1. Accuracy}
We compared the prediction performance of 
\method 
with that of the baselines. 
Figure~\ref{fig:compare_all} shows the MAPE and RMSPE 
on all the datasets.
For the methods that provide 
the event probabilities $p_{\newlseq,\ltime}(\rul)$,
we employ the mean of $p_{\newlseq,\ltime}(\rul)$
as the predicted event time $\hat{\rul}_{\newlseq,\ltime}$. 
Our method consistently outperforms its baselines
because it can capture non-stationary data streams through 
a sequential multi-model structure.
DeepSurv, DeepHit, and Cox-Time are static time-to-event prediction methods that focus only on individual observations.
They fail to capture dynamic changes and sequential features 
for given data streams.
Although TS2Vec and PatchTST effectively learn the contextual representation of sequences, they cannot distinguish multiple stages.
AC-TPC is a predictive clustering method that makes 
predictions while finding clusters.
However, the method is capable of modeling
sequential connectivity between clusters, 
leading to suboptimal results in streaming 
time-to-event prediction.

\TSK{
\begin{figure}[t]
    \centering
    \hspace{0em}
     \includegraphics[width=1\columnwidth]{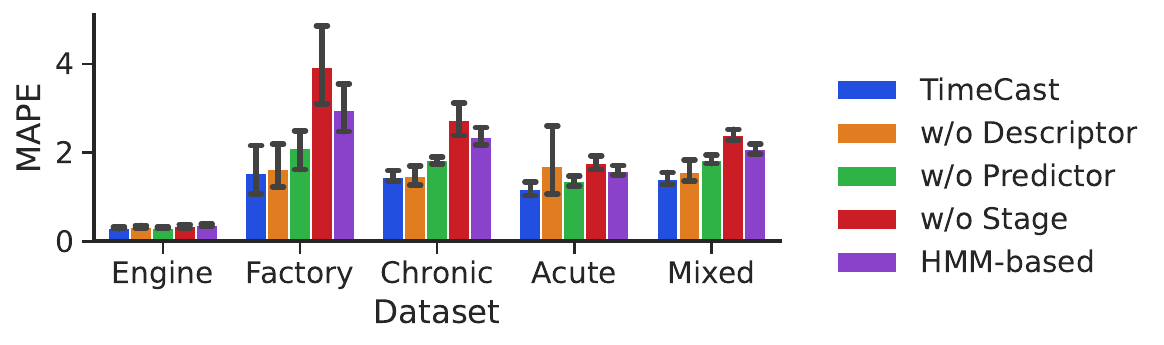}\\
    \vspace{-1.5em}
    \caption{
    Prediction accuracy of \method and its variants 
    on MAPE.
    Each component improves the prediction performance on all datasets 
    (lower is better).
    }
    \label{fig:ablation}
    \begin{tabular}{cc}
    \hspace{-0.5em}
    \begin{minipage}{0.5\columnwidth}
    \centering
     \includegraphics[width=1\linewidth]{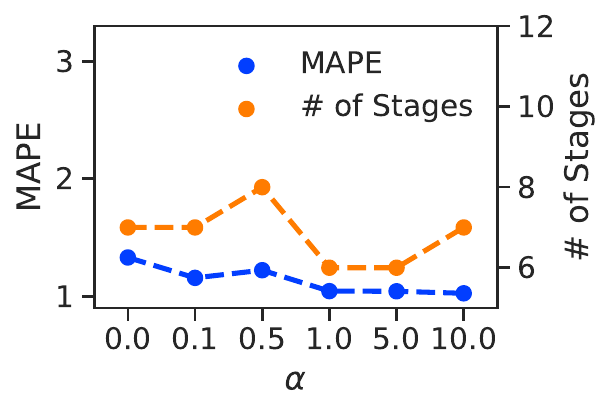}\\
     \end{minipage}
    &
    \hspace{-1.5em}
    \begin{minipage}{0.5\columnwidth}
     \centering
     \includegraphics[width=1\linewidth]{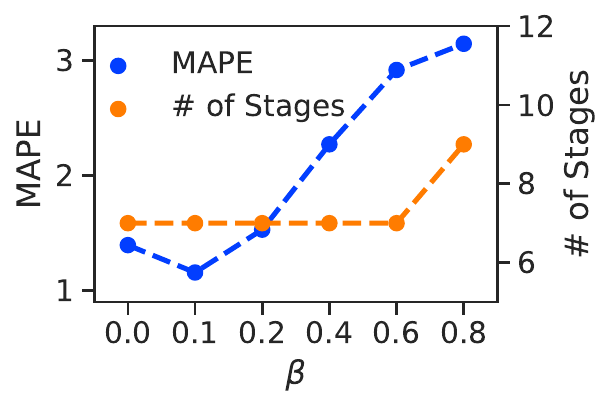}\\
     \end{minipage}
    \end{tabular}
    \vspace{-1.5em}
    \caption{
    Hyperparameter sensitivity of \method.
    }
    \label{fig:parameter}
\end{figure}
}
\myparaitemize{Ablation Study}
To verify the effectiveness of the proposed components in \method,
we conducted ablation studies on all the datasets.
Figure~\ref{fig:ablation} shows 
the prediction accuracy of \method and 
its variants, 
which learn $\Func$ while excluding the effect of a specific component.
Specifically, 
\textbf{(a) w/o Stage} removes the sequential connectivity of \eq{\ref{eqn:progression_stage}},
\textbf{(b) w/o Predictor} removes the effect of the predictor (i.e., $\predpara=0$),
\textbf{(c) w/o Descriptor} learns the stage models without imposing sparsity 
on the precision matrices (i.e., $\lassopara=0$),
and \textbf{(d)  HMM-based} uses HMM to find stages 
instead of the proposed stage models.
The results show that the proposed components are complementary, 
i.e., joint optimization with all parts improves the prediction performance.

\myparaitemize{Hyperparameter Sensitivity}
We analyze the sensitivity of \method
to its hyperparameters.
\fig{\ref{fig:parameter}} shows the prediction results 
when varying hyperparameter settings on the \acutedata dataset.
In the left part of Figure~\ref{fig:parameter}, 
a larger value of the sparsity parameter $\lassopara$ leads to learn more robust descriptors against noise, resulting in a slight improvement in MAPE. 
A detailed analysis of the effect of sparsity is provided in Appendix~\ref{sec:app:results}.
The parameter $\predpara$ indicates the effects of the predictors 
in both the learning algorithms and online model updates.
The right part of Figure~\ref{fig:parameter} shows 
that the larger value of $\predpara$ degrades the MAPE 
while only marginally affecting the number of stages.

\myparaitemize{Q2. Scalability}
\label{sec:scalability}
We evaluate the efficiency of \method.
We first show that 
the learning algorithm 
converges within a small number of iterations.
The left part of \fig{\ref{fig:saclability}} shows the value of our objective function 
(i.e., Eq.~(\ref{eqn:overall_objective})) in each iteration
on the \acutedata dataset.
Thanks to our efficient optimization,
even with $20$ stages, it converged within $20$ iterations.
The right part of \fig{\ref{fig:saclability}} shows 
the computational time for the learning algorithm 
when we vary the total duration of the sequences
on the \mixed dataset.
Since it takes $O(\#iter \cdot \sum_{\lseq}\ntime_{\lseq})$ time 
(as discussed in Lemma~\ref{complexity_learning})
and $\#iter$ is small in practice, 
the complexity scales linearly with respect to the data length.
\fig{\ref{fig:compare_time}} compares 
the prediction time with its competitors as regards computational time 
on all the datasets.
Our method outperforms its baselines 
in speed by up to four orders of magnitude,
enabling rapid response even when sensor readings arrive at high rates.
\TSK{
\begin{figure}[t]
    \centering
    \begin{tabular}{cc}
    \hspace{-2em}
    \begin{minipage}{0.43\columnwidth}
    \centering
     \includegraphics[width=1\linewidth]{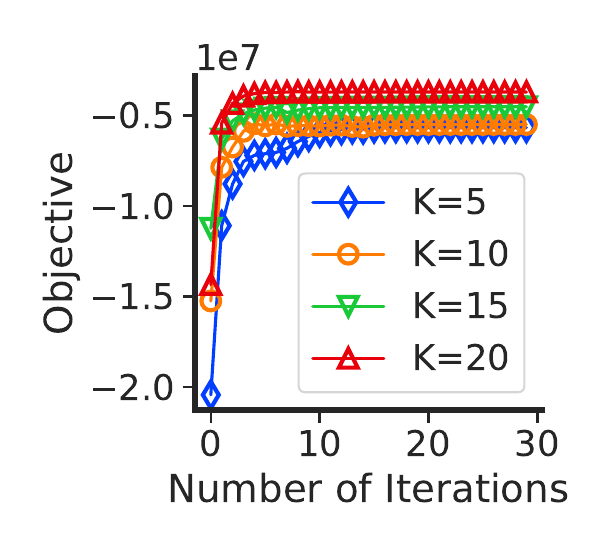}\\
     \end{minipage}
    &
    \hspace{-1.5em}
    \begin{minipage}{0.61\columnwidth}
    \vspace{1em}
     \centering
     \includegraphics[width=1\linewidth]{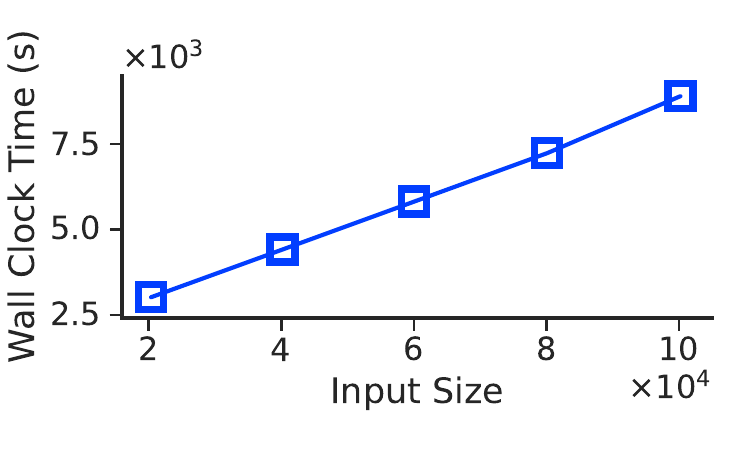}\\
     \end{minipage}
    \end{tabular}
    \vspace{-1.75em}
    \caption{
    Scalability of \method.
    (left) Fast convergence of our learning algorithm.
    It converged within $20$ iterations in the \acutedata dataset.
    (right) Wall clock time vs. input size. 
    The learning algorithm of \method scales linearly.
    }
    \label{fig:saclability}
     \includegraphics[width=1\columnwidth]{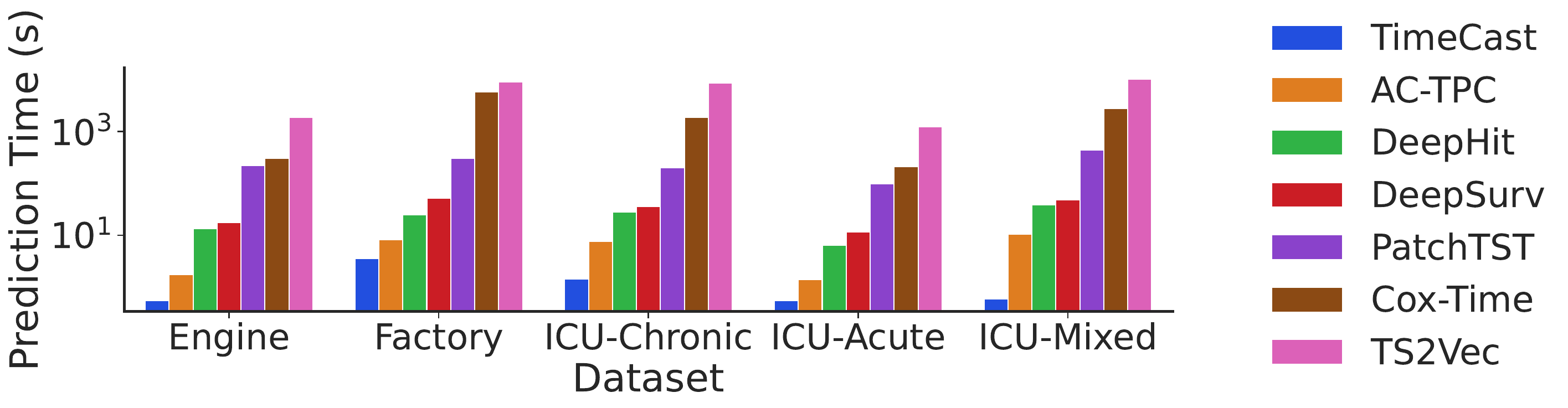}\\
    \vspace{-1em}
    \caption{
    Prediction time for all datasets.
    \method consistently faster than its baselines.
    The results are shown in log scale.
    }
    \label{fig:compare_time}
\end{figure}
}

\myparaitemize{Q3. Real-World Effectiveness}
\begin{figure}[t]
    \centering
     \centering
     \hspace{-2em}
    \includegraphics[width=1\columnwidth]{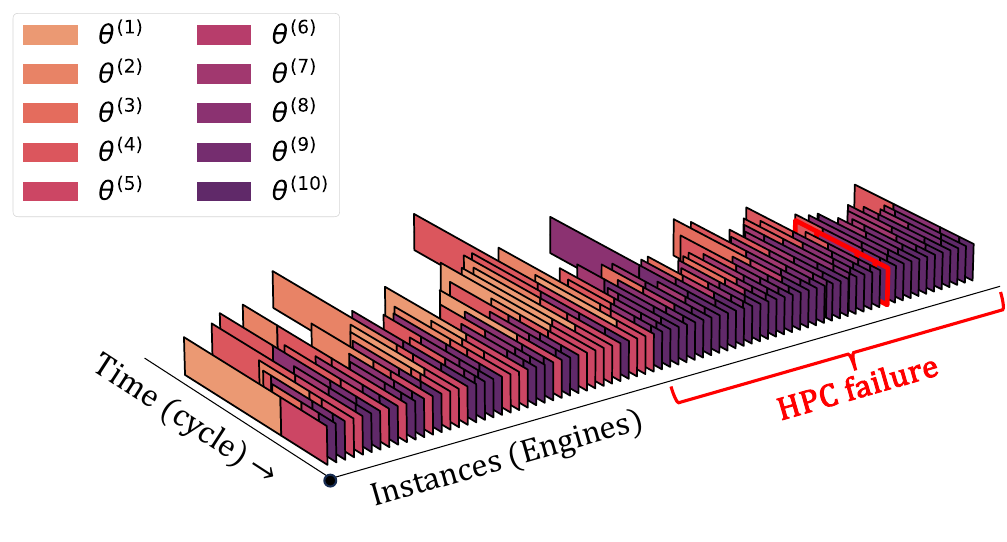}\\
    \vspace{-1.5em}
    \caption{Stage identification of \method.
    The method identifies ten stages, shown as colored segments.
    Here, the failure-specific evolution pattern appears on multiple instances,
    highlighted by the red bracket.
}
    \label{fig:segments}
    \centering
    \hspace{-1em}
     \includegraphics[width=1.05\columnwidth]{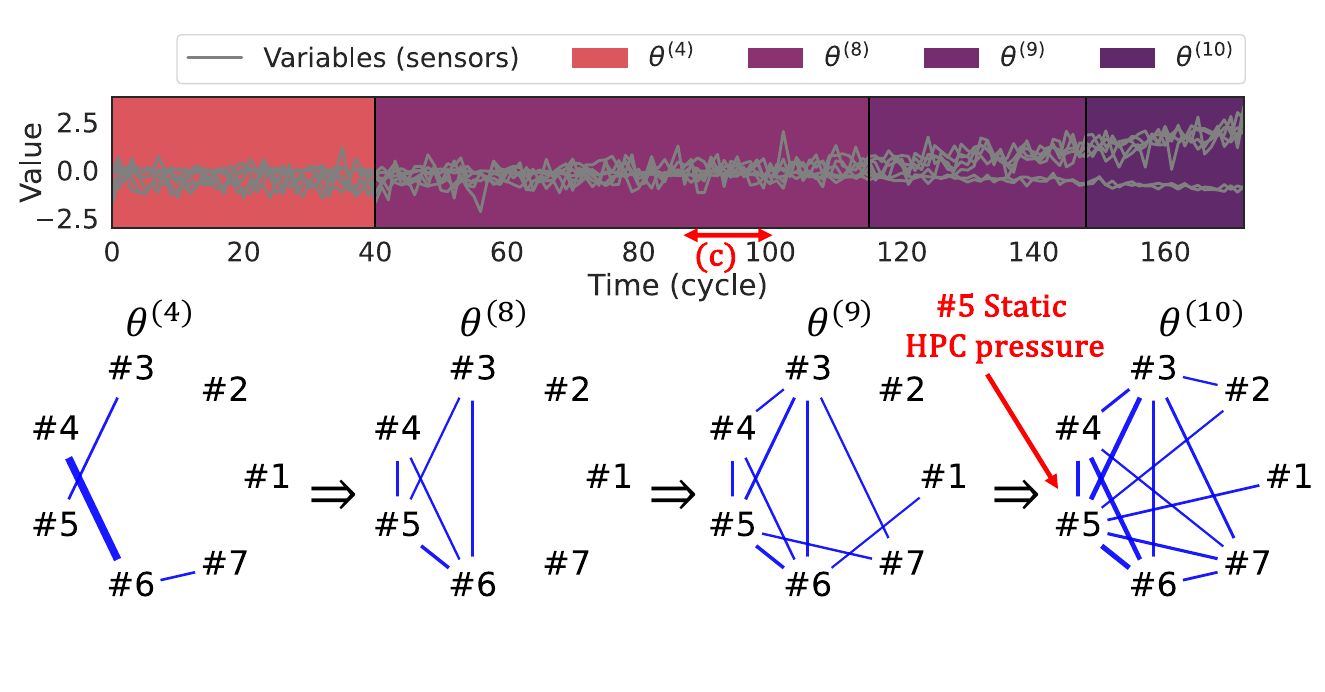}\\
    \vspace{-2.5em}
    \caption{Time-varying interdependencies between seven sensor variables. 
    \method finds that 
    variations in the other sensors depend on the static pressure at the HPC
    just before HPC failure.
    }
    \label{fig:dependency}
\end{figure}
The prediction results for the \cmapss dataset have already been 
presented in Figure~\ref{fig:preview}.
The figure demonstrated that 
\method continuously provides the probabilities for future failure time,
leading to immediate shutdown and maintenance scheduling.
We here provide some of our discoveries on stage identification of \method,
which allow us to understand how the conditions of turbofan engines 
change over time.

Figure~\ref{fig:segments} shows 
stage identification for multiple \subjects (i.e., engines).
\method discovers ten stages (i.e., $\model^{(1)},\ldots,\model^{(10)}$)
and their shifting points, 
where the assignments of each stage  
are indicated as a set of colored segments,
and each engine is aligned along the failure time.
Here, we observed failure-specific behavior:
multiple engines indicated by 
a red bracket have similar evolutionary stages
(i.e., $\model^{(4)}\rightarrow \model^{(8)}\rightarrow \model^{(9)}\rightarrow \model^{(10)}$).
According to the investigation of the dataset \cite{saxena2008damage},
these engines experienced the same type of failure 
called high-pressure compressor (HPC) failure. 

Each stage model $\model^{(\lstage)}$ 
captures interdependencies between sensor variables.
Figure~\ref{fig:dependency} illustrates 
the time-varying interdependencies for an engine that experienced HPC failure,
where 
the dependencies are visualized as a graph.
The nodes indicate individual sensor variables,
and the edge widths indicate the connection intensities.
Notably, the number of edges of \#5 (static pressure at the HPC) 
gradually increases and 
is connected to every other node in the last stage.
This means that just before HPC failure,
variations in the other sensors depend on the static pressure at the HPC.
\section{Conclusion}
    \label{070conclusions}
    In this paper, 
we focused on streaming time-to-event prediction and presented \method,
which exhibits all the desirable properties that we listed in the introduction;
\textbf{(a) Dynamic}: 
\method adaptively 
provides event probabilities at future time points
while detecting and updating stage shifts in data streams.
\textbf{(b) Practical}:
The method continuously predicts the event time with high accuracy while providing semantic information about the data.
\textbf{(c) Scalable}:
The learning algorithm showed fast convergence and 
linear scalability with respect to data size.
The streaming algorithm makes predictions while efficiently updating the model structure.
Our experimental evaluation using five real datasets showed that \method outperforms existing methods in terms of accuracy and execution speed.
The framework of \method is general and flexible, opening up new possibilities
for time-to-event prediction over data streams.
Exploring alternative models for predictors and descriptors,
as well as extending the framework to other application domains,
constitutes an important direction for future research.

\begin{acks}
The authors would like to thank the anonymous referees
for their valuable comments and helpful suggestions.
This work was partly supported by
JSPS KAKENHI Grant-in-Aid for Scientific Research Number JP20910053, JP25K21208
JST CREST JPMJCR23M3,
JST START JPMJST2553,
JST CREST JPMJCR20C6,
JST K Program JPMJKP25Y6,
JST COI-NEXT JPMJPF2009,
JST COI-NEXT JPMJPF2115,
the Future Social Value Co-Creation Project - Osaka University.
\end{acks}

\bibliographystyle{ACM-Reference-Format} 

\appendix
\section*{Appendix}
\section{Algorithms}
\label{sec:app:algo}
\subsubsection{Updating Stage Models}
We describe how to update stage models in detail.
In this step, we estimate the parameters for all stage models
$\{\model^{(\lstage)}\}_{\lstage=1}^{\nstage}$,
while fixing the stage assignments $\allstageas$.
Once the stage assignments are fixed, 
we can optimize each stage $\model^{(\lstage)}$ independently 
by maximizing \eq{\ref{eqn:overall_objective_d1}} and \eq{\ref{eqn:overall_objective_p}}.

We first focus on the descriptor 
$\{\dmean^{(\lstage)},\dprecision^{(\lstage)}\}$.
Since we solve the problem for each stage $\lstage$,
We can rewrite  in terms of each $\dprecision^{(\lstage)}$,
which is estimated so that it can maximize the following equation:

%
\begin{align}
\label{eqn:dmodel_update}
\argmax_{\dprecision \in \sds}~
n^{(\lstage)}(\log\det\dprecision^{(\lstage)} 
-\Tr(\empcov^{(\lstage)} \dprecision^{(\lstage)}))
- \lassopara ||\dprecision^{(\lstage)}||_{od,1},
\end{align}
\normalsize
which 
is equivalent to the graphical lasso problem \cite{glasso2008},
where $\dprecision^{(\lstage)}$ must be 
a symmetric positive-definite ($\sds$),
$n^{(\lstage)}$ is the number of observations assigned to stage $\lstage$,
and $\empcov^{(\lstage)}$ denotes the empirical covariance matrix 
of the observations that are assigned to the $\lstage$-th stage.
Since this is a convex optimization problem,
we use the alternating direction method of multipliers,
which efficiently converges on the globally optimal solution.
The parameter $\dmean^{(\lstage)}$ is 
derived from the empirical mean of the observations assigned to stage $\lstage$.

For the predictor 
$\{\pfunc^{(\lstage)},\bstd^{(\lstage)}\}$, 
%
we maximize \eq{\ref{eqn:overall_objective_p}} 
through maximum likelihood estimation.
We first estimate the link function $\pfunc^{(\lstage)}$ 
that minimizes residual errors,
i.e., min$\sum_{\stageas_{\lseq,\ltime}=\lstage} 
||\rul_{\lseq,\ltime} - \pfunc^{(\lstage)}(x_{\lseq,\ltime})||_2$.
The diffusion parameter $\bstd^{(\lstage)}$ is estimated as follows:
\hide{
Once we obtain the link function $\pfunc^{(\lstage)}$,
we estimate
an increments of the stage-specific progression process, 
$d\plevel^{(\lstage)}(\wtime) = 1/ \pfunc^{(\lstage)}(x_{\lseq,\ltime})$.
Since $\plevel(\wtime)$ has independent increments and is normally distributed,
the diffusion parameter can be \reminder{obtained as}:
}
\begin{align}
\label{eqn:pdiffusion_update}
\bstd^{(\lstage)}=
\biggl[
\frac{1}{n^{(\lstage)}}
\sum_{\stageas_{\lseq,\ltime}=\lstage} 
||\frac{1}{\rul_{\lseq,\ltime}}
- \mu^{(\lstage)}_{\rul}||_2
\biggr]
^{\frac{1}{2}}.
\end{align}
\hide{
\subsection{Proof of Lemma~\ref{convergence}}
\label{sec:app:convergence}
We now provide a convergence guarantee for our learning algorithm.
Our guarantee is derived by viewing the proposed algorithm as a variant of block successive upper-bound minimization (BSUM) \cite{razaviyayn2013unified}; 
in other words, the proposed algorithm maximizes the lower bound of the objective function.
The guarantee ensures the learning algorithm finds a local optima, 
where our objective function is non-decreasing in each iteration and convergent.
\begin{customlemma}
\label{convergence}
The learning algorithm of \method finds local optima for our objective function (i.e., \eq{\ref{eqn:overall_objective}}).
\end{customlemma}
\begin{proof}
We first formally rewrite our optimization problem and the procedure in the learning algorithm.
Let $\bv_{\pidd} \in \fset_\pidd$, 
$\bv_{\pidp} \in \fset_\pidp$, 
and $\bv_{\pids} \in \fset_\pids$ be the blocks of parameters 
for descriptors, predictors, and stage assignments, respectively.
Let $\Obj_{all}$ be the overall objective (i.e., \eq{\ref{eqn:overall_objective}}).
Here, our optimization problem can be rewritten as follows:
\begin{align}
\label{eqn:rewrite_opt_problem}
\max \Obj_{all}(\bv) \hspace{1em} s.t.\hspace{0.5em} \bv \in \fset,
\end{align}
where $\bv = (\bv_{\pidd},\bv_{\pidp},\bv_{\pids})$ 
and $\fset = \fset_\pidd \times \fset_\pidp \times \fset_\pids$.
In our learning algorithm,
at each step $\iter$,
the objective function is maximized for a single block of parameters while the rest are fixed.
More specifically,  
starting from an initial value $\bv^0$, 
the algorithm sequentially updates a block of parameters 
$\bv_{\pramid}^{\iter}$ according to the updating rule, as follows:
\begin{align}
\label{eqn:bsum_formulation}
&\bv_{\pramid}^{\iter} 
\leftarrow 
\argmax_{\bv_{\pramid}} 
\Obj_\pramid(\bv_{\pramid},\bv^{\iter-1})
\hspace{1em}
\forall \pramid \in \{\pidd,\pidp,\pids\},
\\
\label{eqn:bsum_formulation}
&\bv^{\iter}
\leftarrow 
(\bv_{\pramid}^{\iter},\bv_{\pramidsub}^{\iter-1},\bv_{\pramidsub}^{\iter-1})
\hspace{1em}
\forall \pramid \in \{\pidd,\pidp,\pids\}, 
\forall \pramidsub \in \{\pidd,\pidp,\pids\} \backslash \pramid
\end{align}
where $\Obj_{\allstageas}$ is the objective function for updating stage assignments 
(cf. \eq{\ref{eqn:stage_assign}}).
Recall that $\Obj_{d}$ and $\Obj_{p}$ are the objective functions 
for updating descriptors (i.e., \eq{\ref{eqn:overall_objective_d1}}) 
and predictors 
(i.e., \eq{\ref{eqn:overall_objective_p}}), respectively.

Here, each function $\Obj_{\pramid}$ satisfies the following two properties:
\begin{align}
\label{eqn:bound_conditions1}
&\Obj_{\pramid}(\bv_{\pramid}',\bv') = \Obj_{all}(\bv')
\hspace{1em} 
\forall \bv' \in \fset, \forall \pramid \in \{\pidd,\pidp,\pids\},
\\
\label{eqn:bound_conditions2}
&\hspace{3em}
\Obj_{\pramid}(\bv_{\pramid},\bv') 
\leq \Obj_{all}(\bv_{\pramid},\bv'_{\pramidsub},\bv'_{\pramidsub})
\\ 
\nonumber
&\forall \bv_{\pramid} \in \fset_{\pramid},
\forall \bv' \in \fset, 
\forall \pramid \in \{\pidd,\pidp,\pids\},
\forall j \in \{\pidd,\pidp,\pids\}\backslash \pramid,
\end{align}
These properties imply that each function $\Obj_{\pramid}$ is 
a tight lower bound of the overall objective function $\Obj_{all}$.
Recall that the alternating direction method of multipliers for descriptors
converges on the globally optimal solution \cite{boyd2011distributed}, and 
our dynamic procedure finds an optimal assignment for all possible assignments.

Suppose the algorithm is initialized with $\bv^{0}$, and 
the overall function $\Obj_{all}$ is within the region
$\fset^{0} = \{\bv:\Obj_{all}(\bv) \geq \Obj_{all}(\bv^{0})\}$.
Then, we observe the following series of inequalities:
\begin{align}
\label{eqn:inequalities}
\Obj_{all}(\bv^{\iter}) = \Obj_{\pramid}(\bv_{\pramid}^{\iter},\bv^{\iter}) 
\overset{(a)}{\leq} 
\Obj_{\pramid}(\bv_{\pramid}^{\iter+1},\bv^{\iter}) 
\overset{(b)}{\leq} 
\Obj_{all}(\bv^{\iter+1})
\end{align}
where the first equality is due to \eq{\ref{eqn:bound_conditions1}},
the inequality (a) follows from the optimality of $\bv_{\pramid}^{\iter+1}$ in \eq{\ref{eqn:bsum_formulation}},
and the inequality (b) is due to \eq{\ref{eqn:bound_conditions2}}.
A straightforward consequence of \eq{\ref{eqn:inequalities}} is
that the sequence of the objective function values is non-decreasing, that is,
\begin{align}
\label{eqn:non_decrease_objective}
\Obj_{all}(\bv^{0}) \leq \Obj_{all}(\bv^{1}) \leq \Obj_{all}(\bv^{2}) \leq \ldots,
\end{align}
Consider a limit point $\lp$.
Combining \eq{\ref{eqn:non_decrease_objective}} with the continuity of $\Obj_{all}(\cdot)$ implies,
\begin{align}
\label{eqn:converge_limit_point}
\lim_{\iter \to \infty} \Obj_{all}(\bv^{\iter}) = \Obj_{all}(\lp).
\end{align}

Finally, 
let us consider the sequence $\{\bv^{\conviter}_{\pidd}\}$ converging to the limit point $\lp$.
Since $\bv_{\pidd}^{\conviter+1} = \argmax_{\bv_{\pidd} \in \fset_{\pidd}} 
\Obj_{\pidd}(\bv_{\pidd},\bv^{\conviter})$, we obtain,
\begin{align}
\Obj_{\pidd}(\bv_{\pidd},\bv^{\conviter})
\leq
\Obj_{\pidd}(\bv_{\pidd}^{\conviter+1},\bv^{\conviter})
\hspace{1em}
\forall \bv_{\pidd} \in \fset_{\pidd}.
\end{align}
Taking the limit $\conviter \to \infty$ implies
\begin{align}
\Obj_{\pidd}(\bv_{\pidd},\lp)
\leq
\Obj_{\pidd}(\lp_{\pidd},\lp)
\hspace{1em}
\forall \bv_{\pidd} \in \fset_{\pidd}.
\end{align}
Similarly, by repeating the above argument for the other function, we obtain,
\begin{align}
\label{eqn:z_mixmum_param}
\Obj_{\pramid}(\bv_{\pramid},\lp)
\leq
\Obj_{\pramid}(\lp_{\pramid},\lp)
\hspace{1em}
\forall \bv_{\pramid} \in \fset_{\pramid}, \forall \pramid \in \{\pidd,\pidp,\pids\},
\end{align}
in other words, $\lp$ is the local optima of $\Obj_{all}(\cdot)$.
\end{proof}
}

\subsection{Proof of Lemma~\ref{complexity_learning}}
\label{sec:app:complexity_learning}
\begin{customlemma}
The time complexity of the learning algorithm 
in \method is $O(\#iter \cdot \sum_{\lseq}\ntime_{\lseq})$.
\end{customlemma}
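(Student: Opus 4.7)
The plan is to bound the per-iteration cost of the two building blocks of Algorithm~\ref{alg:main} separately, show each is linear in $\sum_{\lseq}\ntime_{\lseq}$ (treating $\ldim$ and $\nstage$ as fixed), and then multiply by the number of outer iterations $\#iter$. First I would analyze \textsc{UpdateStageModels}. Since $\allstageas$ is fixed, each observation $\vecx_{\lseq,\ltime}$ contributes to exactly one stage, so $\sum_{\lstage}n^{(\lstage)}=\sum_{\lseq}\ntime_{\lseq}$. For each stage $\lstage$, building the empirical mean $\dmean^{(\lstage)}$ and empirical covariance $\empcov^{(\lstage)}$ costs $O(n^{(\lstage)}\ldim^2)$; the graphical-lasso subproblem in \eq{\ref{eqn:dmodel_update}} is solved by ADMM on a $\ldim\times\ldim$ matrix, so its cost depends only on $\ldim$, not on $n^{(\lstage)}$; the link function $\pfunc^{(\lstage)}$ (closed-form orthogonal projection) and the diffusion parameter update \eq{\ref{eqn:pdiffusion_update}} each cost $O(n^{(\lstage)}\ldim)$. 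Summing over stages yields $O(\sum_{\lseq}\ntime_{\lseq})$ per iteration, up to the $\ldim$-dependent constant.

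Next I would analyze \textsc{UpdateAssignments}. The dynamic-programming recursion in \eq{\ref{eqn:assign_cost}} fills a table with $\nstage\cdot\ntime_{\lseq}$ cells for each sequence $\lseq$, where each cell evaluates a maximum over at most $\nstage$ predecessors plus one cost term $\cost{\vecx_{\lseq,\ltime}}{\model^{(\lstage)}}$ that can be computed in $O(\ldim^2)$ from cached descriptor and predictor parameters. Hence one sequence costs $O(\nstage^2\ntime_{\lseq})$, and summing over sequences gives $O(\nstage^2\sum_{\lseq}\ntime_{\lseq})$, again linear in the total data length when $\nstage$ is treated as fixed.

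Combining the two steps, one outer iteration costs $O(\sum_{\lseq}\ntime_{\lseq})$, and iterating until convergence yields the claimed $O(\#iter\cdot \sum_{\lseq}\ntime_{\lseq})$ bound. The main obstacle I expect is justifying that the inner ADMM solver for the graphical lasso terminates in a number of iterations that does not grow with $\sum_{\lseq}\ntime_{\lseq}$; I would handle this by noting that the ADMM subproblem's dimension depends only on $\ldim$, so any convergence tolerance translates to a constant inner-iteration count that is absorbed into the hidden constant. A minor bookkeeping issue is that $\#iter$ refers to the outer loop only, so the inner ADMM and link-function solves must be folded into the per-iteration constant, which the argument above supports.
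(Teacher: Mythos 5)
Your proposal is correct and follows essentially the same route as the paper's proof: bound the per-outer-iteration cost of \textsc{UpdateStageModels} and \textsc{UpdateAssignments} separately, observe that the dynamic program dominates at $O(\nstage^2\sum_{\lseq}\ntime_{\lseq})$, treat $\nstage$ and $\ldim$ as constants, and multiply by $\#iter$. If anything, your accounting is slightly more careful than the paper's (you include the $O(n^{(\lstage)}\ldim^2)$ cost of forming the empirical covariances and explicitly justify absorbing the inner ADMM iterations into the constant, whereas the paper folds $\#iter_d$ into $\#iter$), but the decomposition and conclusion are the same.
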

\begin{proof}
For each iteration, 
the learning algorithm first updates 
the descriptors and predictors for each stage.
This procedure takes $O(\#iter_d 
\cdot \nstage)$,
where $\#iter_d$ is the number of iterations needed to estimate the sparse precision matrix $\dprecision^{(\lstage)}$.
To update stage assignments $\allstageas$,
we need $O(\nstage^2\ntime_{\lseq})$ for each sequence according to \eq{\ref{eqn:assign_cost}}.
Therefore, the complexity of updating stage assignments is $O(\nstage^2 
\sum_{\lseq}\ntime_{\lseq})$.
Overall,
the algorithm repeats these two procedures until convergence.
It requires 
$O(\#iter \cdot (\nstage + \nstage^2 \sum_{\lseq}\ntime_{\lseq}))$,
where $\#iter$ is the total number of iterations required for convergence, including $\#iter_{d}$.
The number of stages $\nstage$ is constant and thus negligible. 
Thus, the complexity is $O(\#iter \cdot \sum_{\lseq}\ntime_{\lseq})$.

\end{proof}
Although the worst case complexity is dominated by the convergence, 
$\#iter$ is a small value for the total durations $\sum_{\lseq}\ntime_{\lseq}$,
as shown in Figures~\ref{fig:saclability}.
Thus, the computation time of the learning algorithm in \method 
scales linear on the total duration of the sequences.

\subsection{Proof of Lemma~\ref{complexity_streaming}}
\label{sec:app:complexity_streaming}
\begin{customlemma}
The time complexity of the streaming algorithm
in \method is 
$O((1 + \#iter) \cdot \nstage^2)$ 
amortized per time step.
\end{customlemma}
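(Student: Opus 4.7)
The plan is to decompose Algorithm~\ref{alg:scan} into its two constituents---\textsc{AdaptivePredict}, which fires at every tick, and \textsc{OnlineModelUpdate}, which fires only once per instance when $\ctime=\ntime_{\newlseq}$---and then amortize the batch cost of the latter over the $\ntime_{\newlseq}$ ticks that precede it.

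First, I would bound the per-tick cost of \textsc{AdaptivePredict} directly from the online DP recurrence in Eq.~(\ref{eqn:stage_assign_stream_help}). At each tick $\ctime$ the algorithm produces $\optcost{\lstage}{\ctime}$ for each of the $\nstage$ stages, and a single update requires a constant-time evaluation of $\obj_{d}(\vecx_{\newlseq,\ctime}\mid \dmean^{(\lstage)},\dprecision^{(\lstage)})$ (treating $\ldim$ as a constant) plus an $O(\nstage)$ maximization over the stored values $\{\optcost{\llstage}{\ctime-1}\}_{\llstage=1}^{\lstage}$. Summing over the $\nstage$ stages gives $O(\nstage^{2})$ per tick, which is exactly the ``$1\cdot \nstage^{2}$'' term in the claimed bound.

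Next, I would bound a single invocation of \textsc{OnlineModelUpdate}. Each of its $\#iter$ outer iterations calls \textsc{UpdateAssignments} on $X_{\newlseq}$, which, by the same DP argument as in the proof of Lemma~\ref{complexity_learning}, costs $O(\nstage^{2}\ntime_{\newlseq})$; it then calls \textsc{UpdateStageModels-Online}, which leverages Welford's incremental formulas for per-stage means, covariances, and predictor residuals to refresh all $\nstage$ stage models in time $O(\nstage\,\ntime_{\newlseq})$ without revisiting the observations that built the historical statistics. A single invocation therefore costs $O(\#iter\cdot \nstage^{2}\ntime_{\newlseq})$, and since exactly one invocation is triggered during the $\ntime_{\newlseq}$ ticks of the instance, amortizing delivers $O(\#iter\cdot \nstage^{2})$ per tick. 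Combining this with the \textsc{AdaptivePredict} contribution yields the stated $O((1+\#iter)\cdot \nstage^{2})$ amortized bound.

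The step I expect to be the main obstacle is justifying the $O(\nstage\,\ntime_{\newlseq})$ cost of \textsc{UpdateStageModels-Online}: across iterations \textsc{UpdateAssignments} can reshuffle past observations among stages, and a naive recomputation of each stage's sufficient statistics would require a separate linear pass per stage, inflating the work to $O(\nstage^{2}\ntime_{\newlseq})$ even before the graphical-lasso solves. The careful argument I would make is that the per-stage Welford accumulators need only absorb an $O(1)$ increment/decrement for each observation whose stage index changed between consecutive iterations, and the total number of such flips per iteration is at most $\ntime_{\newlseq}$. Once this linear per-iteration invariant is in place, the rest of the derivation reduces to straightforward summation and amortization.
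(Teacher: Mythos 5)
Your proposal is correct and follows essentially the same route as the paper's proof: bound \textsc{AdaptivePredict} at $O(\nstage^2)$ per tick via the stage-wise DP maximization, bound the once-per-instance \textsc{OnlineModelUpdate} at $O(\#iter\cdot\nstage^2\ntime_{\newlseq})$ by reusing the Lemma~\ref{complexity_learning} argument, and amortize the latter over the $\ntime_{\newlseq}$ ticks. Your extra care about incrementally maintaining the Welford accumulators under reassignment is a sharper accounting than the paper gives, but it does not change the decomposition or the final bound.
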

\begin{proof}
We consider a stream instance $X_\newlseq$.
For each time step $\ctime$, 
\method executes \textsc{AdaptivePredict}.
The algorithm first estimates 
the current stage $\stageas_{\newlseq,\ctime}$ for each observation $x_{\newlseq,\ctime}$
based on the updating rule (i.e., \eq{\ref{eqn:stage_assign_stream_help}}).
This update is formulated as a dynamic programming procedure over
stages, 
where the cost $\{\optcost{\lstage}{\ctime}\}_{\lstage=\lstage'}^{\nstage}$ 
is computed by considering all valid
transitions from previous stages $\lstage'$. 
Consequently, this step requires
$O(\nstage^2)$.
Then, it accesses the prediction model for the stage $\stageas_{\newlseq,\ctime}$ 
and predicts the time to event 
$p_{\newlseq,\ctime}(\rul)$.
This procedure takes $O(1)$.
Therefore, the total complexity of \textsc{AdaptivePredict} 
is $O(\nstage^2)$.

\textsc{OnlineModelUpdate} is 
executed only once per stream instance $X_{\newlseq}$,
after all $\ntime_\newlseq$ time steps 
have been processed (i.e., when $\ctime =\ntime_\newlseq$).
Similar to proof of Lemma~\ref{complexity_learning},
the iteration of \textsc{UpdateAssignments} and \textsc{UpdateStageModels}
requires $O(\#iter \cdot (\nstage + \nstage^2 \ntime_\newlseq))$.
Recall that $\#iter$ is the total number of iterations required for convergence.
Then, \textsc{UpdateStageModels-Online} uses Welford’s algorithm, which
takes $O(\nstage)$.
Finally, 
it evaluates the prediction accuracy of $\allmodel^{+}$,
requiring $O(\ntime_{\newlseq})$.
Hence, the total complexity of \textsc{OnlineModelUpdate} is
$O(\nstage + \ntime_\newlseq + \#iter \cdot (\nstage  +\nstage^2 \ntime_\newlseq))$

Over the entire stream instance $X_\newlseq$, \textsc{AdaptivePredict} is executed 
$\ntime_\newlseq$
times, resulting in a total cost of 
$O(\nstage^2\ntime_\newlseq)$.
Combining both parts, the total computational cost for $X_\newlseq$ is
$O(\nstage^2\ntime_\newlseq + \nstage + \ntime_\newlseq + \#iter \cdot (\nstage  +\nstage^2 \ntime_\newlseq))$.
Dividing the total cost by $\ntime_\newlseq$ yields an amortized per-step complexity of 
$O(\nstage^2 + \frac{\nstage}{\ntime_\newlseq} + 1 + \#iter \cdot (\frac{\nstage}{\ntime_\newlseq} + \nstage^2))$.
Since $\ntime_\newlseq$ is sufficiently large,
the per-step amortized time complexity is 
$O((1 + \#iter) \cdot \nstage^2)$.

\end{proof}

\section{Experiments}
\label{sec:app:exp}
\begin{figure*}[t]
    \centering
    \begin{tabular}{ccc}
     \hspace{-4em}
     \begin{minipage}{0.35\linewidth}
     \centering
    \vspace{0.5em}
    \hspace{-1.em}
    \includegraphics[width=0.82\linewidth]{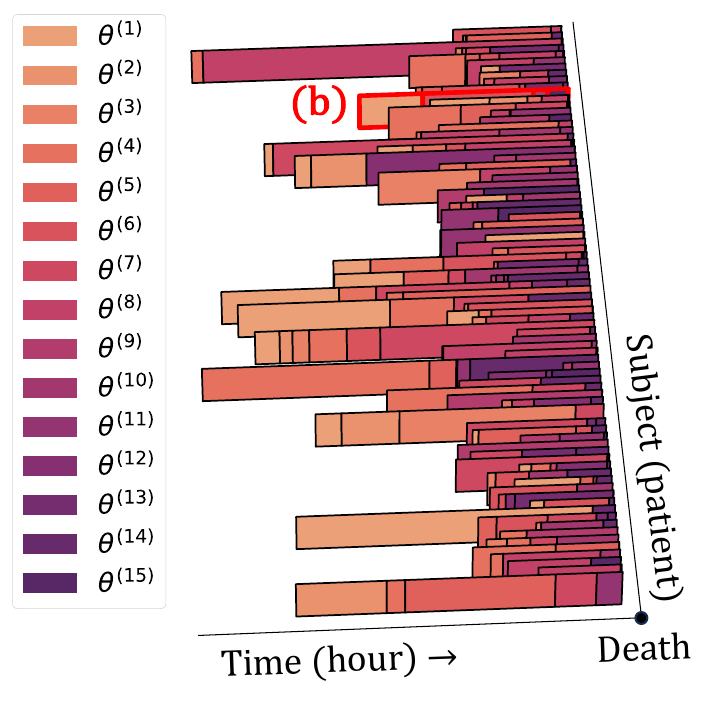}\\
     \vspace{-0.75em}
    \hspace{1em}
     (a) Stage assignments
     \end{minipage}
     &
    \begin{minipage}{0.52\linewidth}
    \vspace{2.5em}
    \hspace{-8.75em}
     \centering 
     \includegraphics[width=1\linewidth]{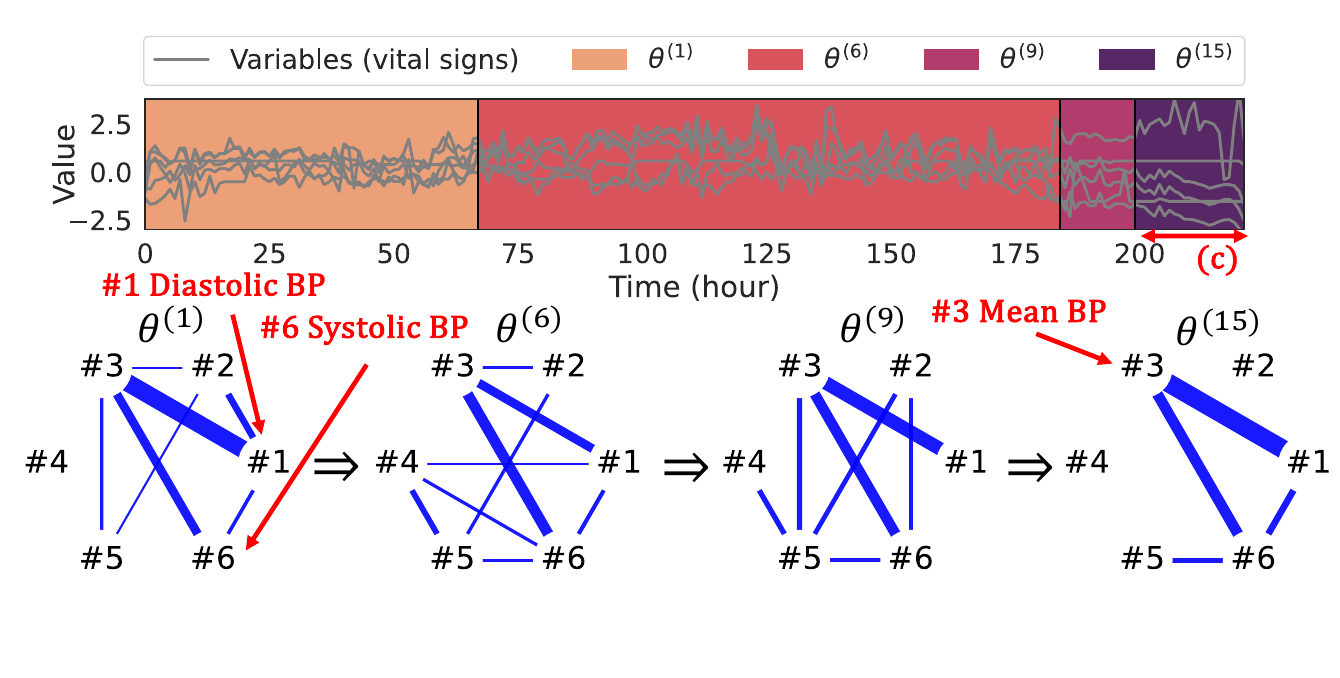}\\
     \vspace{-1.7em}
    \hspace{-8.2em}
     (b) Dynamic changes of \intera structure
    \end{minipage}
     &
     \hspace{-5.7em}
     \begin{minipage}{0.28\linewidth}
     \centering
    \vspace{4.65em}
     \includegraphics[width=1\linewidth]{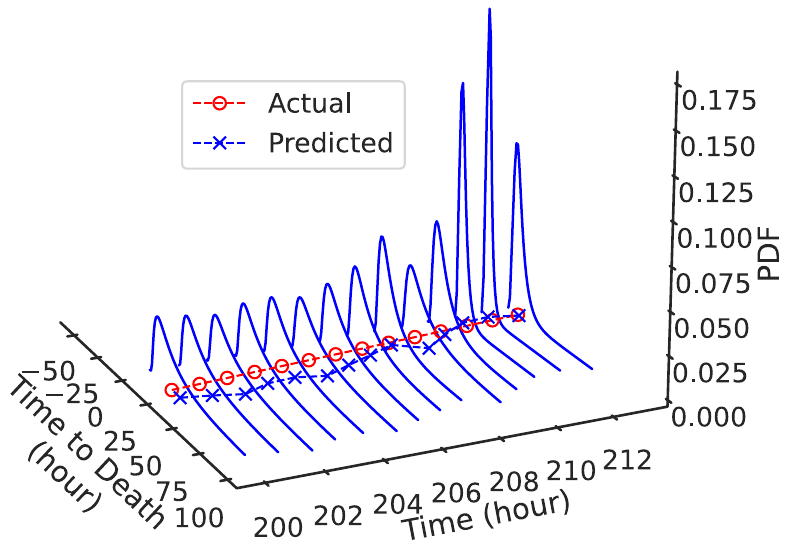}\\
    \hspace{-3em}
    (c) Time-to-event prediction
     \end{minipage}
    \end{tabular}
    \caption{
       Modeling power of \method on the \acutedata dataset.
       Given a set of sensor sequences,
       collected from $112$ patients followed until death,
       \method discovers (a) the stage assignments $\allstageas$,
       where a set of colored segments indicates the assignments of each stage.
       \method represents each stage by a stage model $\model^{(\lstage)}$,
       which can provide (b) the dynamic \intera structures between 
       the six vital signs
       and (c) the probability density function 
        of the time to death every hour.
    }
    \label{fig:eff}
    \vspace{-1em}
\end{figure*}


\subsection{Experimental Setup and Datasets}
\label{sec:app:expsetup:datasets}
We conducted our experiments on
an Intel Xeon Gold $6258$R @$2.70$GHz
with $512$GB of memory and running Linux.
We normalized the values so that each sequence 
had the same mean and variance (i.e., z-normalization).
%
MAPE and RMSPE are computed based on  
the percentage errors between the predicted event time $\hat{\rul}_{\newlseq,\ltime}$ and 
the actual event time $\rul_{\newlseq,\ltime}$:
\small
\begin{align}
MAPE=\frac{1}{\sum_{\newlseq=1}^{\newnseq}\ntime_{\newlseq}}
\sum_{\newlseq\ltime=1}^{\newnseq,\ntime_{\newlseq}}
\frac{|\hat{\rul}_{\newlseq,\ltime} - \rul_{\newlseq,\ltime}|}{\rul_{\newlseq,\ltime}},
\end{align}
\begin{align}
RMSPE=\sqrt{\frac{1}{\sum_{\newlseq=1}^{\newnseq}\ntime_{\newlseq}}
\sum_{\newlseq\ltime=1}^{\newnseq,\ntime_{\newlseq}}
\biggr(\frac{\hat{\rul}_{\newlseq,\ltime} - \rul_{\newlseq,\ltime}}{\rul_{\newlseq,\ltime}}\biggr)^2}.
\end{align}
\normalsize
Although 
\method can provide 
the probability distribution 
$p_{\newlseq,\ltime}(\hat{\rul})$,
we employ the mean of the distribution as the value $\hat{\rul}_{\newlseq,\ltime}$ for a fair comparison.

For 
The statistics of our datasets are provided in Table~\ref{table:datasets}.
Here, we briefly demonstrate how these datasets were prepared 
for our experiments.
\bit
\item 
\cmapss\cite{saxena2008damage}
\footnote{
https://data.nasa.gov/Aerospace/CMAPSS-Jet-Engine-Simulated-Data/ff5v-kuh6} 
is a public dataset for asset degradation modeling from NASA.
It includes the degradation data of turbofan jet engines simulated by C-MAPSS,
where each engine 
has different degrees of initial wear and manufacturing variation.
Sensor observations are collected at each cycle.
Since some sensor readings have constant outputs,
we use seven sensor measurements, 
2, 3, 4, 7, 11, 12, and 15,
following a previous study \cite{wang2008similarity},
i.e., 2: total temperature at an LPC outlet, 
3: total temperature at an HPC outlet, 
4: total temperature at an LPT outlet, 
7: total pressure at an HPC outlet,
11: static pressure at an HPC,
12: outlet psia phi Ratio of fuel flow to Ps30,
and 15: bypass ratio.

\item 
\mapm\footnote{https://github.com/Azure/AI-PredictiveMaintenance}
is a publicly available dataset that
consists of hourly averages of voltage, rotation, pressure, and vibration 
collected from $100$ machines for the year $2015$.
We use all the sensor measurements for
$98$ machines, where failures eventually occur.

\item 
\chronic is Medical Information Mart for Intensive Care (MIMIC) data.
We use MIMIC-\RomanNumeralCaps{3}
\cite{johnson2016mimic}, 
a large set of open electronic health records on
PhisioNet
\cite{goldberger2000physiobank}
that include vital signs, 
medications,
and laboratory measurements.
We follow the settings in \cite{harutyunyan2019multitask}, 
where each patient has ICU phenotypes, 
and observations are recorded every hour.
The phenotypes are grouped into three categories, 
chronic, acute, and mixed.
For the \chronic dataset, 
we studied $355$ patients whose chronic phenotypes
and employed six types of continuous sensor data, 
diastolic blood pressure, heart rate, mean blood pressure,
oxygen saturation, respiratory rate, and systolic blood pressure.
\item 
\acutedata is also MIMIC-\RomanNumeralCaps{3} data. 
We studied $112$ patients labeled with mutually conclusive acute phenotypes. 
\item 
\mixed is also derived using the MIMIC-\RomanNumeralCaps{3} data. 
We studied $521$ patients labeled with mutually conclusive mixed phenotypes. 
\eit

\subsection{Implementation \& Parameters}
\label{sec:app:expsetup:parameters}

We used the open-source implementation of 
DeepSurv, DeepHit, and Cox-Time in \cite{kvamme2019time}, 
and those of AC-TPC, TS2Vec, and PatchTST provided by the authors. 
The DNN-based models were optimized based on Adam. 
For the number of epochs, we employed widely used model checkpointing, where we monitor prediction performance on a validation set 
at each epoch and retain the model from the best-performing epoch to avoid overfitting.
For the learning rates, we searched multiple values suggested in the original papers. 
For DeepSurv, DeepHit and Cox-Time, 
we built a 2-layer fully-connected network with 32 nodes.
For AC-TPC, we set its label space $\mathcal{Y}=\mathbb{R}$ 
and learned 
the predictive cluster for regression tasks to predict $\rul_{\lseq,\ltime}$.
For clustering-based baselines, the number of clusters is set to $\nstage \in \{5, 10, 15, 20\}$.
For TS2Vec, 
we set the representation dimension at 128 
and trained a linear regression model with 
a $\ell_2$-norm penalty that 
took $x_{\newlseq,\ltime}$ as its input to directly predict the event time $\rul_{\newlseq,\ltime}$.

For all the methods, 
the input feature was set as a sliding window with the window size $\window$,
where we used $[x_{\lseq,\ltime-\window},\ldots, x_{\lseq,\ltime}]$ as the input $x_{\lseq,\ltime}$. 
The window size $\window$ was set at $10\%$ of the average sequence length for each dataset,
i.e., \cmapss: $\window=20$, \mapm: $\window=90$, \chronic:
$\window=30$, \acutedata: $\window=20$, and \mixed: $\window=30$.

\subsection{Results}
\label{sec:app:results}
\begin{table}[t]
\centering
\caption{Integrated Brier Score (IBS) comparison on the \#2 \mapm dataset (lower is better).}
\label{tab:ibs_results}
\scalebox{0.82}{
\begin{tabular}{l|c}
\toprule
Method & IBS \\
\midrule
\method  & \textbf{0.24483} \\
Cox-Time  & 0.51092 \\
DeepHit   & 0.51097 \\
DeepSurv  & 0.51114 \\
\bottomrule
\end{tabular}
}
\end{table}
\myparaitemize{Accuracy}
To further evaluate the accuracy of predicted event probabilities, 
we conduct additional experiments using standard survival analysis
metrics.
Specifically, we assess the predictive accuracy of our method and baseline survival models using the integrated Brier score (IBS), which measures the squared error between
the predicted survival probabilities and the observed event outcomes.

For each instance $\newlseq$ at time step $\ltime$,
the Brier score (BS) at horizon $\tau$ is defined as follows:
\begin{equation}
\mathrm{BS}(\newlseq,\ltime,\tau)
= \bigl( \mathbb{I}(T_{\newlseq} > \ltime + \tau) - \hat{S}_{\newlseq,\ltime}(\tau) \bigr)^2,
\end{equation}
where $\mathbb{I}(\cdot)$ is the indicator function and
$\hat{S}_{\newlseq,\ltime}(\tau)$ denotes the predicted survival probability.
The IBS is computed by averaging the BS over all instances, time steps, and prediction horizons:

\begin{equation}
\mathrm{IBS}
= \frac{1}{\sum_{\newlseq=1}^{\newnseq} \ntime_{\newlseq}}
\sum_{\newlseq=1}^{\newnseq}
\sum_{\ltime=1}^{\ntime_{\newlseq}}
\frac{1}{L}\sum_{\tau=1}^{L}
\mathrm{BS}(\newlseq,\ltime,\tau).
\end{equation}

Note that the survival function 
$\hat{S}_{\newlseq,\ltime}(\tau)$ in \method is obtained as the complement of the cumulative distribution function (CDF) of the predicted inverse Gaussian distribution. 
The CDF of the predicted inverse Gaussian distribution can be analytically derived from the estimated parameters of the stage-specific Wiener process, i.e., $\nu$ and $\sigma_B$. 
Denoting the CDF as $F_{\newlseq,\ltime}(\tau)$, the survival function is obtained as its complement,
$\hat{S}_{\newlseq,\ltime}(\tau) 
= 1 - F_{\newlseq,\ltime}(\tau)$, representing the probability that the event has not yet occurred by time $\tau$.

Table~\ref{tab:ibs_results} reports the IBS results on the \#2 \mapm dataset. \method consistently achieves lower IBS values compared to existing survival models, indicating superior predictive accuracy of event probabilities.

\begin{figure}[t]
    \centering
     \includegraphics[width=0.95\columnwidth]{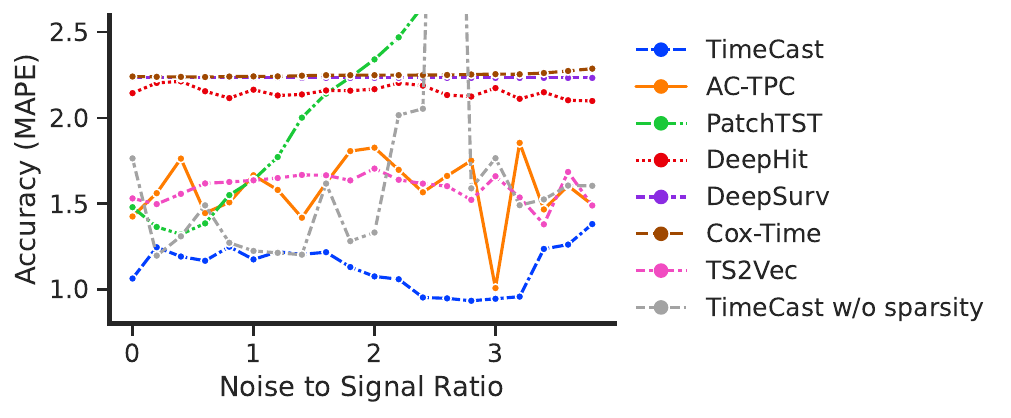}\\
    \caption{
    \method is consistently outperforms baseline methods.
    Imposing sparsity on the precision matrix makes its accuracy stable.
    }
    \label{fig:noise_vs}
\end{figure}
\myparaitemize{Noise Resilience of \method}
\fig{\ref{fig:noise_vs}} compares the mean absolute percentage error (MAPE) of the prediction methods 
when varying the noise to signal ratio on the \acutedata dataset, 
where \method w/o sparsity optimizes the stage models 
without imposing sparsity on the precision matrices. 
We incremented noise until the performance equal to the ratio, 
as follows \cite{tozzo2021statistical}.
\method is consistently more accurate than 
the baselines, 
while the performance of \method w/o sparsity 
deteriorated with the growth of the noise-to-signal ratio, 
suggesting our dependency network reduces the effect of noise.

\myparaitemize{Real-World Effectiveness}
Figure~\ref{fig:eff} shows our mining result
for the \acutedata dataset,
which consists of continuous patient monitoring data recorded in ICUs,
where six vital signs were collected every hour from $112$ patients.
All the patients were followed 
in acute care until their deaths, 
which resulted from clinically critical events, 
such as respiratory failure or sepsis.

Figure~\ref{fig:eff}~(a) shows 
the discovered stage assignments $\allstageas$,
which identify distinct time-series patterns and their shifting points.
Although 
\reminder{patient's conditions vary over time 
depending on the clinical interventions and the potential risk of diseases,
this representation}
allows us to find
similar patient behavior.
Figure~\ref{fig:eff}~(b) shows 
the stage assignments and dynamic changes in \intera structures 
(i.e., $\dprecision^{(\lstage)}$) for a patient with respiratory failure.
Here, we observe that 
sensor \#3 (mean blood pressure) is consistently connected to
\#1 (diastolic blood pressure) and \#6 (systolic blood pressure) over all stages.
This means that variations in diastolic and systolic blood pressure
depend on mean blood pressure regardless of the risk of respiratory failure.
Figure~\ref{fig:eff}~(c) shows a snapshot of time-to-event prediction in the patient.
\method continuously estimates the current stage $\stageas_{\newlseq,\ctime}$
for the observation $x_{\newlseq,\ctime}$
and adaptively predicts the event time $p(\rul_{\newlseq,\ctime})$,
employing the stage model $\model^{(\stageas_{\newlseq,\ctime})}$.
\hide{
\myparaitemize{Learning Time}
\fig{\ref{fig:learning_time_vs}} compares the training time needed by \method and its baselines for the results 
we reported in Figure~{\ref{fig:compare_all}}. 
The figure shows that the learning algorithm is efficient for non-stationary sensor sequences. It is consistently faster than comparably accurate methods, such as AC-TPC. Although Cox-Time is the fastest method, it cannot capture dynamic changes in sequences and thus suffers in terms of accuracy.

\myparaitemize{Convergence of our learning algorithm}
Figure~\ref{fig:convergence} shows 
additional results regarding the scalability of our learning algorithm.
The algorithm exhibits fast convergence 
within a small number of iterations over all datasets.
\begin{figure}[t]
    \centering
    \begin{tabular}{cc}
    \begin{minipage}{0.5\columnwidth}
     \centering
    \hspace{-0.5em}
     \hspace{-2.2em}
     \includegraphics[width=1\linewidth]{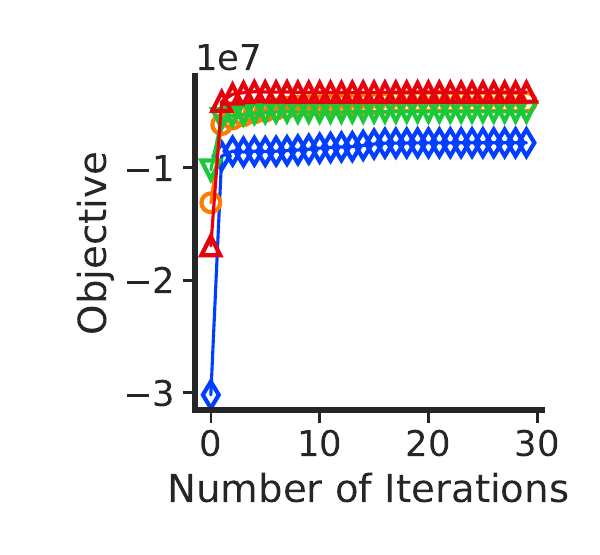}\\
     \vspace{-1em}
     \small
    (a) \cmapss dataset
     \normalsize
     \end{minipage}
     &
     \hspace{-1em}
    \begin{minipage}{0.5\columnwidth}
    \centering
    \hspace{-0.5em}
    \hspace{-2.2em}
     \includegraphics[width=1\linewidth]{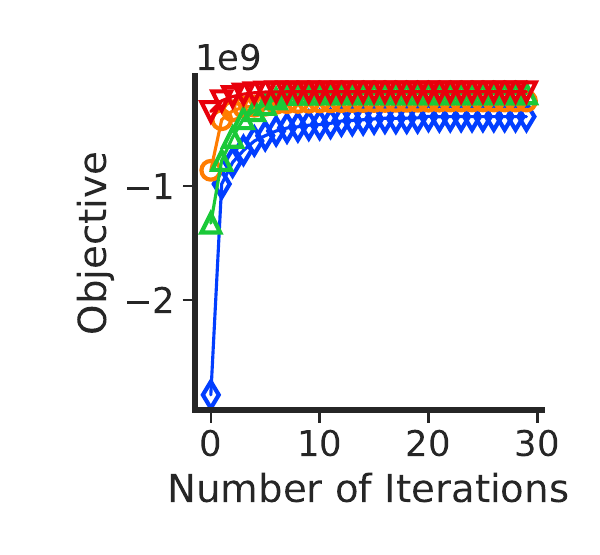}\\
     \vspace{-1em}
     \small
    (b) \mapm dataset
     \normalsize
     \end{minipage}
    \end{tabular}
    \begin{tabular}{cc}
    \begin{minipage}{0.5\columnwidth}
     \centering
     \hspace{-1.5em}
     \includegraphics[width=0.98\linewidth]{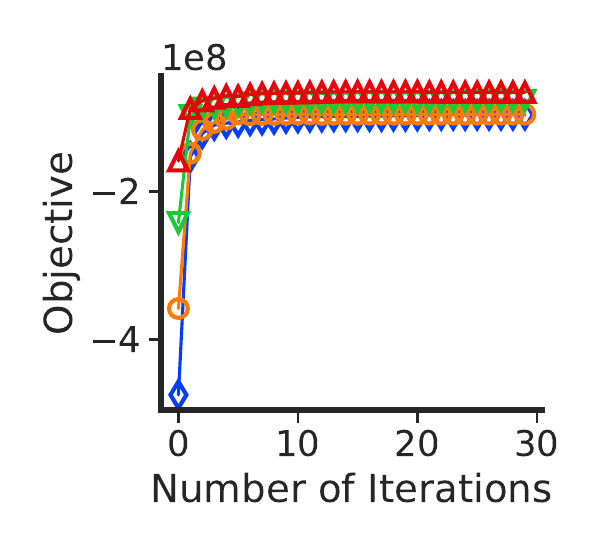}\\
     \vspace{-1em}
     \small
    (c) \chronic dataset
     \normalsize
     \end{minipage}
     &
     \hspace{-1em}
    \begin{minipage}{0.5\columnwidth}
    \centering
    \hspace{-1.3em}
     \includegraphics[width=0.98\linewidth]{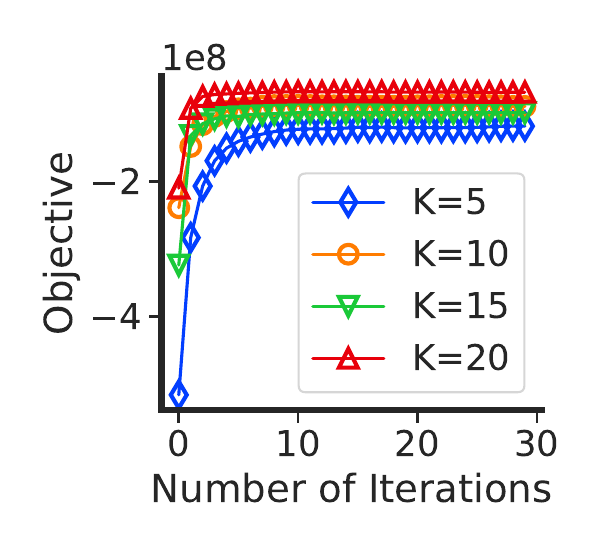}\\
     \vspace{-1em}
     \small
    (d) \mixed dataset
     \normalsize
     \end{minipage}
    \end{tabular}
    \vspace{-0.5em}
    \caption{
    Fast convergence of our learning algorithm.
    It converged within a small number of iterations with real-world datasets.
    }
    \label{fig:convergence}
\end{figure}
}
\end{document}